\def\nec{\Box}
\def\pos{\Diamond}
\def\Next{\bigcirc}
\newcommand{\V}[1]{\mathbf{#1}}
\def\M{\mathcal{M}}
\def\H{\V{H}}
\def\T{\V{T}}
\def\D{\V{D}}
\def\L{\mathcal{L}}
\newcommand{\tuple}[1]{\langle #1 \rangle}
\newcommand{\res}[1]{\mathord{|}_{#1}\xspace}
\newcommand{\infest}{\mathbin{|\kern -.47em\sim}}
\newcommand{\LDef}{\ensuremath{\Lang=\langle C,P \rangle}\xspace}
\newcommand{\Lang}{\ensuremath{\mathcal{L}}\xspace}
\newcommand{\eqdef}{\mathbin{\stackrel{\mathrm{def}}{=}}}
\newcommand{\unmedio}{\leavevmode
\kern.1em\raise.5ex\hbox{$\scriptstyle1$}
\kern-.13em\hbox{$\scriptstyle/$}\kern-.1em\lower.25ex\hbox{$\scriptstyle2$}}
\def\qed{~\hfill{$\boxtimes$}}
\newenvironment{proofof}[1]{\vspace{10pt}\noindent {\bf Proof of~#1}.}{\qed}
\newtheorem{definition}{Definition}
\newtheorem{lemma}{Lemma}
\newtheorem{proposition}{Proposition}
\newtheorem{theorem}{Theorem}
\newtheorem{example}{Example}
\begin{document}

\long\def\comment#1{{\color{blue} #1}}

\title{Temporal Logic Programs with Variables}
\author[F. Aguado, P. Cabalar, M. Di{\'e}guez, G. P{\'e}rez, C. Vidal]{
    Felicidad Aguado$^1$, Pedro Cabalar$^1$, Mart{\'{\i}}n Di{\'e}guez$^2$, Gilberto P{\'e}rez$^1$ \and Concepci{\'o}n Vidal$^1$\\
    \\
    $^1$ Dept. of Computer Science\\
    University of Corunna, SPAIN\\
    \texttt{\{aguado,cabalar,gperez,eicovima\}@udc.es}\\ \\
    $^2$ IRIT - Universit\'e Paul Sabatier\\
    Toulouse, FRANCE\\
    \texttt{martin.dieguez@irit.fr}\\    
}

\pagerange{\pageref{firstpage}--\pageref{lastpage}}
\jdate{January 2016}
\setcounter{page}{1}
\pubyear{2016}

\maketitle

\label{firstpage}

\begin{abstract}
In this note we consider the problem of introducing variables in temporal logic programs under the formalism of \emph{Temporal Equilibrium Logic} (TEL), an extension of Answer Set Programming (ASP) for dealing with linear-time modal operators. To this aim, we provide a definition of a first-order version of TEL that shares the syntax of first-order Linear-time Temporal Logic (LTL) but has a different semantics, selecting some LTL models we call \emph{temporal stable models}. Then, we consider a subclass of theories (called \emph{splittable temporal logic programs}) that are close to usual logic programs but allowing a restricted use of temporal operators. In this setting, we provide a syntactic definition of \emph{safe variables} that suffices to show the property of \emph{domain independence} -- that is, addition of arbitrary elements in the universe does not vary the set of temporal stable models. 
 Finally, we present a method for computing the derivable facts by constructing a non-temporal logic program with variables that is fed to a standard ASP grounder. The information provided by the grounder is then used to generate a subset of ground temporal rules which is equivalent to (and generally smaller than) the full program instantiation.
 
{\bf Under consideration in Theory and Practice of Logic Programming (TPLP)}
\end{abstract}

\begin{keywords}
Artificial Intelligence; Knowledge Representation; Temporal Logic; Grounding; Logic Programming; Answer Set Programming
\end{keywords}

\section{Introduction}

Many application domains and example scenarios from Answer Set Programming (ASP)~\cite{Nie99,MT99} contain a dynamic component, frequently representing transition systems over discrete time. 
In an attempt to provide a full logical framework for temporal reasoning in ASP, \cite{ACPV08} proposed a formalism called \emph{Temporal Equilibrium Logic} (TEL), syntactically identical to propositional \emph{Linear-time Temporal Logic} (LTL) \cite{Pnueli77}, but semantically relying on a temporal extension of \emph{Equilibrium Logic}~\cite{Pea96}, the most general and best studied logical characterisation of stable models~\cite{GL88}. 
 In~\cite{ACPV11} a reduction of (propositional) TEL into regular LTL was presented, but applicable to a subclass of temporal theories called \emph{splittable Temporal Logic Programs}.
This syntactic fragment deals with temporal rules in which, informally speaking, ``past does not depend on the future,'' a restriction general enough to cover most (if not all) existing examples of ASP temporal scenarios. The reduction was implemented in a tool, {\tt STeLP}\footnote{\url{http://kr.irlab.org/stelp_online}}~\cite{CD11a}, that computes the temporal stable models of a given program, that are shown as a B\"uchi automaton.


Although the approach in~\cite{ACPV11} was exclusively propositional, the input language of {\tt STeLP} \ was extended with variables. As in non-temporal ASP, these were just understood as a shortcut for all their possible ground instances.
This initial approach was not fully satisfactory for several reasons. First, it forced that any variable instance was not only \emph{safe} (that is, occurring in the positive body of the rule) but also ``typed'' by a \emph{static} predicate, i.e., a predicate whose extent does not vary along time. Second, this restriction implied sometimes the generation of irrelevant ground rules that increased the size of the resulting ground LTL theory while they could be easily detected and removed by a simple analysis of the temporal program. Last, but not least, the treatment of variables had no formal background and had not been proved to be sound with respect to the important property of \emph{domain independence}~\cite{BFL08} -- essentially, a program is domain independent when its stable models do not vary under the arbitrary addition of new constants. Although the usual definition of safe variables guarantees domain independence, there was no formal proof for temporal logic programs under TEL.

In this note we provide some results that allow an improved treatment of variables in temporal logic programs, using a first order version of TEL as underlying logical framework. We relax the {\tt STeLP} definition of safe variable by removing the need for static predicates so that, as in ASP, a variable in a rule is \emph{safe} when it occurs in the positive body\footnote{This definition of safety, initially introduced in {\tt DLV}~\cite{LPF+06} has been adopted in the standard ASP-Core-2~\cite{ASPCore2} and also followed by {\tt Gringo}~\cite{GKK+11}.}. We prove that this simpler safety condition satisfies domain independence. 
Finally, we describe a method for grounding temporal logic programs under this new safety condition that still allows calling a standard ASP grounder as a backend, but using a positive normal logic program that is generated by a given transformation on the original temporal logic program.


\section{A motivating example}\label{sec:motiv}


\begin{example}\label{ex:cars}
Suppose we have a set of cars placed at different cities and, at each transition, we can drive a car from one city to another in a single step, provided that there is a road connecting them.\qed
\end{example}

Figure~\ref{fig:cars} contains a possible representation of this scenario in the language of {\tt STeLP}. Operator `{\tt o}' stands for ``next'' whereas ``{\tt ::-}'' corresponds to the standard ASP conditional ``{\tt :-}'', but holding at all time points. Rule {\tt (1)} is the effect axiom for driving car {\tt X} to city {\tt A}. The disjunctive rule {\tt (2)} is used to generate possible occurrences of actions in a non-deterministic way. Rules {\tt (3)} and {\tt (4)} represent the inertia
of fluent {\tt at(X,A)}. Finally, rule {\tt (5)} forbids that a car is at two different cities simultaneously.
\begin{figure}[htbp]
\begin{center}
\begin{Verbatim}[frame=single,fontsize=\footnotesize]
static city/1, car/1, road/2.

o at(X,A)  ::- driveto(X,A), car(X), city(A).                       % (1)

driveto(X,B) v no_driveto(X,B) ::- at(X,A), car(X), road(A,B).      % (2)

o at(X,A)    ::- at(X,A), not o no_at(X,A), car(X), city(A).        % (3)
  no_at(X,A) ::- at(X,B), A!=B, car(X), city(A), city(B).           % (4)

::- at(X,A), at(X,B), A!=B, car(X), city(A), city(B).               % (5)
\end{Verbatim}
\end{center}
\caption{A simple car driving scenario.}
\label{fig:cars}
\end{figure}

As we can see in the first line, predicates {\tt city/1}, {\tt car/1} and {\tt road/2} are declared to be {\tt static}. The scenario would be completed with rules for static predicates. These rules constitute what we call the \emph{static program} and can only refer to static predicates without temporal operators. An example of a static program for this scenario could be:
\begin{verbatim}
road(A,B) :- road(B,A).  % roads are bidirectional
city(A) :- road(A,B).
car(1). car(2).
road(lisbon,madrid). road(madrid,paris).
road(boston,ny). road(ny,nj).
\end{verbatim}
\noindent Additionally, our temporal program would contain rules describing the initial state like, for instance, the pair of facts:

\begin{verbatim}
at(1,madrid). at(2,ny).
\end{verbatim}

Note that all variables in a rule are always in some atom for a static predicate in the positive body. 
The current grounding process performed by {\tt STeLP} just consists in feeding the static program to an ASP grounder ({\tt DLV} or {\tt gringo}) and, once it provides an extension for all the static predicates, each temporal rule is instantiated for each possible substitution of variables according to static predicates. In our running example, for instance, the grounder provides a unique model\footnote{If the static program yields several stable models, each one generates a different ground theory whose temporal stable models are computed independently.} for the static program containing the facts:
\begin{verbatim}
car(1), car(2), city(lisbon), city(madrid), city(paris), 
city(boston), city(ny), city(nj), road(lisbon,madrid), 
road(madrid,lisbon), road(madrid,paris), road(paris,madrid), 
road(boston,ny), road(ny,boston), road(ny,nj), road(nj,ny)
\end{verbatim}
\noindent With these data, rule {\tt (1)} generates 12 ground instances, since we have two possible cars for {\tt X} and six possible cities for {\tt A}. Similarly, rule {\tt (4)} would generate 60 instances as there are 30 pairs {\tt A,B} of different cities and two cars for {\tt X}. Many of these ground rules, however, are irrelevant. Take, for instance:

\begin{verbatim}
o at(1,ny) ::- driveto(1,ny).
no_at(1,paris) ::- at(1,ny).
\end{verbatim}

\noindent corresponding to possible instantiations of {\tt (1)} and {\tt (4)}, respectively. In both cases, the body refers to a situation where car 1 is located or will drive to New York, while we can observe that it was initially at Madrid and that the European roadmap is disconnected from the American one. Of course, one could additionally encode a static reachability predicate to force that rule instances refer to reachable cities for a given car, but this would not be too transparent or elaboration tolerant. 

On the other hand, if we forget, for a moment, the temporal operators and we consider the definition of safe variables used in ASP, one may also wonder whether it is possible to simply require that each variable occurs in the positive body of rules, without needing to refer to static predicates mandatorily. Figure~\ref{fig:cars2} contains a possible variation of the same scenario allowing this possibility. Our goal is allowing this new, more flexible definition of safe variables and exploiting, if possible, the information in the temporal program to reduce the set of generated ground rules.

\begin{figure}[htbp]
\begin{center}
\begin{Verbatim}[frame=single]
static city/1, car/1, road/2.

o at(X,A)  ::- driveto(X,A).

driveto(X,B) v no_driveto(X,B) ::- at(X,A), road(A,B).

o at(X,A)    ::- at(X,A), not o no_at(X,A).
  no_at(X,A) ::- at(X,B), A!=B, city(A).

::- at(X,A), at(X,B), A!=B.
\end{Verbatim}
\end{center}
\caption{A possible variation of the cars scenario.}
\label{fig:cars2}
\end{figure}

\section{Temporal Quantified Equilibrium Logic}\label{sec:qel}

Syntactically, we consider function-free first-order languages $\LDef$ built over a set of \emph{constant} symbols, $C$, and a set of
\emph{predicate} symbols, $P$.
Using $\mathcal{L}$, connectors and variables, an $\LDef$-\emph{formula} $F$ is defined following the grammar:
\begin{gather*}
F ::= p \ | \ \bot \ | \ F_1 \wedge F_2 \ | \ F_1 \vee F_2 \ | \ F_1 \rightarrow F_2 \ | \ \\  \Next \, F \ |  \  \ \nec F \ | \ \pos F \ | \  \forall x F(x) \ | \ \exists x F(x)
\end{gather*}
\noindent where $p \in P$ is an atom, $x$ is a variable and $\Next$, $\nec$ and $\pos$ respectively stand for ``next'', ``always'' and ``eventually.'' A \emph{theory} is a finite set of formulas. We use the following derived operators:

\[
\begin{array}{rcl}
\neg F & \eqdef & F \rightarrow \bot\\
\top & \eqdef & \neg \bot \\
F \leftrightarrow G & \eqdef & (F \rightarrow G) \wedge (G \rightarrow F)
\end{array}
\]
\noindent for any formulas $F,G$. An \emph{atom} is any $p(t_1, \ldots, t_n)$ where $p \in P$ is a predicate with $n$-arity and each $t_i$ is a term (a constant or a variable) in its turn. We say that a term or a formula is \emph{ground} if it does not contain variables. An $\L$-\emph{sentence} or closed-formula is a formula without free-variables.

The application of $i$ consecutive $\Next$'s is denoted as follows: $\Next^i \varphi \eqdef \Next (\Next^{i-1} \varphi)$ for $i>0$ and $\Next^0 \varphi \eqdef \varphi$.
A \emph{temporal fact} is a construction of the form $\Next^i A$ where $A$ is an atom.

Let $D$ be a non-empty set (the \emph{domain} or \emph{universe}). By $At(D,P)$ we denote the set of ground atomic sentences of the language $\langle D, P\rangle $. We will also define an interpretation $\sigma$ of constants in $C$ (and domain elements in $D$) as a mapping \label{sigma}$$\sigma\colon C \cup D  \rightarrow D$$ such that $\sigma(d)=d$ for all $d\in D$.

A first-order LTL-interpretation is a structure $\tuple{(D,\sigma),\T}$ where $D$ and $\sigma$ are as above and $\T$ is an infinite sequence of sets, $\T=\{T_i\}_{i\geq 0}$ with $T_i \subseteq At(D,P)$. Intuitively, $T_i$ contains those ground atoms that are true at situation $i$.  Given two LTL-interpretations $\H$ and $\T$ we say that $\H$ is \emph{smaller than} $\T$, written $\H \leq \T$, when $H_i \subseteq T_i$ for all $i\geq 0$. As usual, $\H < \T$ stands for: $\H \leq \T$ and $\H \neq \T$. We define the ground temporal facts associated to $\T$ as follows: $Facts(\T)\eqdef \{\Next^i p \ | \ p \in T_i\}$. It is easy to see that $\H \leq \T$ iff $Facts(\H) \subseteq Facts(\T)$.

Given $\T$ as above, we denote by $\T\res C$ the sequence of sets $\{T_i \res C\}_{i \geq 0}$, where each $T_i \res C=T_i \cap At(\sigma(C),P)$, i.e., those atoms from $T_i$ that contain terms  exclusively formed with universe elements that are images of syntactic constants in $C$.

\begin{definition}\label{def:struct}
A \emph{temporal-here-and-there} ${\cal L}$-structure with
static domains, or a \emph{$\mathbf{TQHT}$-structure}, is a
tuple $\M = \langle (D,\sigma), \H,  \T \rangle$ where
$\langle (D,\sigma), \H \rangle$ and $\langle (D,\sigma), \T \rangle$ are two LTL-interpretations satisfying $\H \leq \T$.\qed
\end{definition}

A $\mathbf{TQHT}$-structure of the form $\M = \langle (D,\sigma), \T,  \T \rangle$ is said to be \emph{total}. If $\M = \langle (D,\sigma), \H,  \T \rangle$ is a $\mathbf{TQHT}$-structure and $k$ any positive integer, we denote by $(\M,k)=\langle (D,\sigma),( \H,k),  (\T,k) \rangle$ the temporal-here-and-there ${\cal L}$-structure with $(\H,k)=\{H_i\}_{i \geq k}$ and  $(\T,k)=\{T_i\}_{i \geq k}$. The satisfaction relation for $\M=\langle (D,\sigma), \H,  \T \rangle$ is defined recursively forcing us to consider formulas from $\langle C\cup D, P\rangle $. Formally, if $\varphi $ is an $\mathcal{L}$-sentence for the atoms in $At(C\cup D,P)$, then:

\begin{itemize}
\item If $\varphi =p(t_1,\dots,t_n)\in At(C\cup D,P)$, then \begin{align*}
\M \models p(t_1,\dots,t_n)\quad &\mbox{ iff } \quad
p(\sigma(t_1),\dots,\sigma(t_n))\in H_0.\\
\M \models t=s\;\; & \mbox{ iff } \quad
\sigma(t)=\sigma(s)
\end{align*}
\item
$\M \not\models \bot$

\item
$\M \models \varphi \land \psi$ iff $\M \models
\varphi$ and $\M \models \psi$.
\item
$\M \models \varphi \lor \psi$ iff $\M\models
\varphi$ or $\M \models \psi$.
\item
$\M \models \varphi \to \psi$ iff $\langle (D,\sigma), w,  \T \rangle \not\models \varphi$ or $\langle (D,\sigma), w,  \T \rangle \models \psi$ for all $w \in\{\H,\T\}$
\item $\M \models \Next \varphi$ \ \ if $(\M,1) \models \varphi$.
\item $\M \models \nec \varphi$ \ \ if $\forall j \geq 0$, \ $(\M,j) \models \varphi$
\item $\M \models \pos \varphi$ \ \ if $\exists j \geq 0$, \ $(\M,j) \models \varphi$
\item
$\langle (D,\sigma), \H,  \T \rangle \models \forall x \varphi(x)$ iff  $\langle (D,\sigma), w,  \T \rangle \models
\varphi(d)$ for all  $d\in D$ and for all $w \in\{\H,\T\}$.
\item
$\M  \models \exists x \varphi(x)$ iff
$\M \models\varphi(d)$ for some $d\in D$.
\end{itemize}

The resulting logic is called \emph{Quantified Temporal Here-and-There Logic with static domains}, and denoted by $\mathbf{SQTHT}$ or simply by $\mathbf{QTHT}$. It is not difficult to see that, if we restrict to total $\mathbf{TQHT}$-structures, $\langle (D,\sigma), \T,  \T \rangle \models \varphi$ iff $\langle (D,\sigma), \T,  \T \rangle\models \varphi$ in first-order LTL. Furthermore, the following property can be easily  checked by structural induction.
\begin{proposition} \label{prop:1}
For any formula $\varphi$, if $\tuple{(D,\sigma), \H,\T} \models \varphi$, then:
\[
\tuple{(D,\sigma), \T,\T} \models \varphi
\]
 \end{proposition}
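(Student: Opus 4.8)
The plan is to establish the property by structural induction on $\varphi$, taking as complexity measure the number of connectives, quantifiers and temporal operators occurring in $\varphi$. This measure is unchanged when a variable is replaced by a domain element, which is exactly what we need, since the clauses for $\forall$ and $\exists$ reduce a sentence to its instances $\psi(d)$ over the extended atom set $At(C\cup D,P)$. The induction hypothesis, stated uniformly, is: for every $\mathbf{TQHT}$-structure $\langle(D,\sigma),\H',\T'\rangle$ and every sentence $\psi$ of complexity strictly below that of $\varphi$, if $\langle(D,\sigma),\H',\T'\rangle\models\psi$ then $\langle(D,\sigma),\T',\T'\rangle\models\psi$. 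Throughout, fix $\M=\langle(D,\sigma),\H,\T\rangle$ with $\H\leq\T$, and write $\M^{\T}=\langle(D,\sigma),\T,\T\rangle$ for its total counterpart.

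For the base cases: if $\varphi=p(t_1,\dots,t_n)$, then $\M\models\varphi$ means $p(\sigma(t_1),\dots,\sigma(t_n))\in H_0$; since $\H\leq\T$ gives $H_0\subseteq T_0$, the atom lies in $T_0$ as well, so $\M^{\T}\models\varphi$. The clauses for $\bot$ and for equalities $t=s$ do not refer to the ``here'' component at all, so there is nothing to prove. The cases $\varphi=\psi_1\wedge\psi_2$, $\varphi=\psi_1\vee\psi_2$ and $\varphi=\exists x\,\psi(x)$ then follow at once by applying the induction hypothesis to the immediate subformulas, and in the existential case to a witnessing instance $\psi(d)$.

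The two cases that might look delicate, $\varphi=\psi_1\rightarrow\psi_2$ and $\varphi=\forall x\,\psi(x)$, are in fact immediate and need no induction hypothesis: in each of them the satisfaction clause evaluated at $\M^{\T}$ is obtained from the clause evaluated at $\M$ merely by restricting the implicit universal quantification over $w\in\{\H,\T\}$ to the single value $w=\T$, so $\M\models\varphi$ trivially entails $\M^{\T}\models\varphi$. For the temporal cases $\varphi\in\{\Next\psi,\,\nec\psi,\,\pos\psi\}$, the point to check is that each shifted structure $(\M,k)=\langle(D,\sigma),(\H,k),(\T,k)\rangle$ is again a $\mathbf{TQHT}$-structure --- which holds because $\H\leq\T$ entails $(\H,k)\leq(\T,k)$ --- and that $(\M^{\T},k)$ is precisely the total structure built over $(\T,k)$. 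Granting this, one applies the induction hypothesis to $\psi$ at the relevant shift: at $k=1$ for $\Next$, at some witness $k=j\geq 0$ for $\pos$, and at every $k=j\geq 0$ for $\nec$, recombining the conclusions to obtain $\M^{\T}\models\varphi$.

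I do not expect a genuine obstacle here; the only things that require a little care are precisely the two bookkeeping points just mentioned --- keeping the complexity measure invariant under the substitution used in the quantifier clauses, and verifying that the structural constraint $\H\leq\T$ of a $\mathbf{TQHT}$-structure survives a temporal shift, so that the induction hypothesis remains applicable to the shifted structures --- and neither of these is hard.
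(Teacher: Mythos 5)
Your proof is correct and follows exactly the route the paper indicates, namely a structural induction on $\varphi$ (the paper merely asserts that the property ``can be easily checked by structural induction'' without spelling it out). Your write-up correctly identifies the only points needing care --- that the implication and universal-quantifier clauses are immediate because totality just restricts $w\in\{\H,\T\}$ to $w=\T$, and that the shifted structures $(\M,k)$ remain $\mathbf{TQHT}$-structures so the induction hypothesis (stated for all structures) applies to the temporal cases --- so nothing is missing.
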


A \emph{theory} $\Gamma$ is a set of $\mathcal{L}$-sentences. An interpretation $\M$ is a model of a theory $\Gamma$, written $\M \models \Gamma$, if it satisfies all the sentences in $\Gamma$.

\begin{definition}[Temporal Equilibrium Model]
A {\em temporal equilibrium} model of a theory $\Gamma$ is a total model $\M=\langle
(D,\sigma),\T,\T\rangle$ of $\Gamma$ such that there is no $\H < \T$ satisfying $\langle (D,\sigma),\H,\T\rangle \models \Gamma$.\qed
\end{definition}

If $\M=\langle(D,\sigma),\T,\T\rangle$ is a temporal equilibrium model of a theory $\Gamma$, we say that the First-Order LTL interpretation $\langle(D,\sigma),\T\rangle$ is a \emph{temporal stable model} of $\Gamma$. We write $TSM(\Gamma)$ to denote the set of temporal stable models of $\Gamma$. The set of \emph{credulous consequences} of a theory $\Gamma$, written $CredFacts(\Gamma)$ contains all the temporal facts that occur at some temporal stable model of $\Gamma$, that is:
\begin{eqnarray*}
CredFacts(\Gamma) & \eqdef & \bigcup_{\tuple{(D,\sigma),\T} \in TSM(\Pi)} Facts(\T)
\end{eqnarray*}

A property of TEL directly inherited from Equilibrium Logic (see Proposition 5 in~\cite{Pea06}) is the following:
\begin{proposition}[Cumulativity for negated formulas]\label{prop:cum}
Let $\Gamma$ be some theory and let $\neg \varphi$ be some formula such that $\M \models \neg \varphi$ for all temporal equilibrium models of $\Gamma$. Then, the theories $\Gamma$ and $\Gamma \cup \{\neg \varphi\}$ have the same set of temporal equilibrium models.\qed
\end{proposition}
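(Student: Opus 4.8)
Write $\Gamma' \eqdef \Gamma \cup \{\neg\varphi\}$. Since $\Gamma \subseteq \Gamma'$, any (total or non-total) $\mathbf{TQHT}$-structure that models $\Gamma'$ also models $\Gamma$; the only thing at stake is therefore whether adding the constraint $\neg\varphi$ destroys some equilibrium model of $\Gamma$ or creates a new one. The plan is to rule out both possibilities by two short arguments over the definition of temporal equilibrium model, the second one relying on Proposition~\ref{prop:1}.

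For the inclusion ``every temporal equilibrium model of $\Gamma$ is a temporal equilibrium model of $\Gamma'$'': let $\M = \tuple{(D,\sigma),\T,\T}$ be a temporal equilibrium model of $\Gamma$. By hypothesis $\M \models \neg\varphi$, so $\M$ is a total model of $\Gamma'$. If there were $\H < \T$ with $\tuple{(D,\sigma),\H,\T} \models \Gamma'$, then in particular $\tuple{(D,\sigma),\H,\T} \models \Gamma$, contradicting that $\M$ is an equilibrium model of $\Gamma$. Hence no such $\H$ exists and $\M$ is a temporal equilibrium model of $\Gamma'$. This direction uses nothing beyond the definitions and the hypothesis on $\neg\varphi$.

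For the converse, let $\M = \tuple{(D,\sigma),\T,\T}$ be a temporal equilibrium model of $\Gamma'$. Then $\M \models \Gamma$, so it only remains to show there is no $\H < \T$ with $\tuple{(D,\sigma),\H,\T} \models \Gamma$. Suppose, for contradiction, that such an $\H$ exists. Since $\M \models \neg\varphi$ and $\M$ is total, the only witness $w$ available in the satisfaction clause for $\rightarrow$ is $\T$ itself, so $\tuple{(D,\sigma),\T,\T} \not\models \varphi$. Now I apply the contrapositive of Proposition~\ref{prop:1}: if we had $\tuple{(D,\sigma),\H,\T} \models \varphi$, then $\tuple{(D,\sigma),\T,\T} \models \varphi$, a contradiction; hence $\tuple{(D,\sigma),\H,\T} \not\models \varphi$. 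Combining $\tuple{(D,\sigma),\H,\T} \not\models \varphi$ with $\tuple{(D,\sigma),\T,\T} \not\models \varphi$ yields, by the clause for $\rightarrow$, that $\tuple{(D,\sigma),\H,\T} \models \varphi \rightarrow \bot$, i.e. $\tuple{(D,\sigma),\H,\T} \models \neg\varphi$. Therefore $\tuple{(D,\sigma),\H,\T} \models \Gamma'$ with $\H < \T$, contradicting that $\M$ is a temporal equilibrium model of $\Gamma'$.

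\textbf{Main obstacle.} The delicate point is precisely the second direction: a total structure satisfying $\neg\varphi$ only forbids $\varphi$ at its ``there'' component, whereas a non-total structure $\tuple{(D,\sigma),\H,\T}$ satisfies $\neg\varphi$ only if $\varphi$ fails at \emph{both} $\H$ and $\T$; the gap between these two facts is exactly what the here-implies-there persistence of Proposition~\ref{prop:1} closes. The rest is bookkeeping with the definition of temporal equilibrium model, and one should note the whole argument keeps $(D,\sigma)$ fixed throughout, so no domain-related subtlety arises.
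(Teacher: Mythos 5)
Your proof is correct. Note that the paper itself supplies no proof of Proposition~\ref{prop:cum}: it is inherited by citation from Equilibrium Logic (Proposition 5 in~\cite{Pea06}), and your argument is precisely the standard one lifted to the temporal setting. The first inclusion is immediate from $\Gamma \subseteq \Gamma \cup \{\neg\varphi\}$ together with the hypothesis that every temporal equilibrium model of $\Gamma$ satisfies $\neg\varphi$, and in the converse direction you correctly isolate the only delicate point — that a strictly smaller $\tuple{(D,\sigma),\H,\T}$-model of $\Gamma$ also satisfies $\neg\varphi$ — and close it exactly as the equilibrium-logic proof does, by combining $\tuple{(D,\sigma),\T,\T}\not\models\varphi$ with the persistence property of Proposition~\ref{prop:1}.
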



In this work, we will further restrict the study to a syntactic subset called \emph{splittable}
temporal formulas (STF) which will be of one of the following types:

\begin{eqnarray}
B \wedge N & \rightarrow & H \label{f:in0} \\
B \wedge \Next B' \wedge N \wedge \Next N' & \rightarrow & \Next H' \label{f:in1} \\
\nec(B \wedge \Next B' \wedge N \wedge \Next N' & \rightarrow & \Next H')
\label{f:d}
\end{eqnarray}
\noindent where $B$ and $B'$ are conjunctions of atomic formulas, $N$ and $N'$ are conjunctions of $\neg p$, being $p$ an atomic formula and  $H$ and $H'$ are disjunctions of atomic formulas.

\begin{definition}
A \emph{splittable temporal logic program} (\emph{STL-program} for short) is a finite set of sentences like $$\varphi= \forall x_1 \forall x_2 \ldots \forall x_n \psi \label{f:q},$$
\noindent where $\psi$ is a splittable temporal formula with $x_1,x_2,\ldots,x_n$ free variables.
\end{definition}

We will also accept in an STL-program an implication of the form $\nec (B \wedge N \rightarrow H)$ (that is, containing $\nec$ but not any $\Next$) understood as an abbreviation of the pair of STL-formulas:
\begin{eqnarray*}
B \wedge N & \rightarrow & H \\
\nec(\Next B \wedge \Next N & \rightarrow & \Next H)
\end{eqnarray*}

\begin{example}\label{ex1}
The following theory $\Pi_{\ref{ex1}}$ is an STL-program:
\begin{eqnarray}
\neg p & \rightarrow & q \label{ex1.1} \\
 q \wedge \neg \Next r & \rightarrow & \Next p \label{ex1.2} \\
\nec ( q \wedge \neg \Next p & \rightarrow & \Next q ) \label{ex1.3} \\
\nec ( r \wedge \neg \Next p & \rightarrow & \Next r \vee \Next q) \label{ex1.4}
\end{eqnarray}
\end{example}

For an example including variables, the encoding of Example~\ref{ex:cars} in Figure~\ref{fig:cars2} is also an STL-program $\Pi_{\ref{ex:cars}}$ whose logical representation corresponds to:
\begin{eqnarray}
\nec (\ Driveto(x,a) & \rightarrow & \Next At(x,a) \ ) \label{f:car.1}\\
\nec (\ At(x,a) \wedge Road(a,b) & \rightarrow & Driveto(x,b) \vee NoDriveto(x,b) \ )  \label{f:car.2}\\
\nec (\ At(x,a) \wedge \neg \Next NoAt(x,a) & \rightarrow & \Next At(x,a) \ )  \label{f:car.3}\\
\nec (\ At(x,b) \wedge City(a) \wedge a \neq b & \rightarrow & NoAt(x,a) \ )  \label{f:car.4}\\
\nec (\ At(x,a) \wedge At(x,b) \wedge a \neq b & \rightarrow & \bot \ )  \label{f:car.5}
\end{eqnarray}

\noindent Remember that all rule variables are implicitly universally quantified. For simplicity, we assume that inequality is a predefined predicate.

An STL-program is said to be \emph{positive} if for all rules \eqref{f:in0}-\eqref{f:d}, $N$ and $N'$ are empty (an empty conjunction is equivalent to $\top$). An STL-program is said to be \emph{normal} if it contains no disjunctions, i.e., for all rules \eqref{f:in0}-\eqref{f:d}, $H$ and $H'$ are atoms.

Given a propositional combination $\varphi$ of temporal facts with $\wedge, \vee, \bot, \rightarrow$, we denote $\varphi^i$ as the formula resulting from replacing each temporal fact $A$ in $\varphi$ by $\Next^i A$. For a formula $r= \nec \varphi$ like \eqref{f:d}, we denote by $r^i$ the corresponding $\varphi^i$. For instance, $\eqref{ex1.3}^i=(\Next^i q \wedge \neg \Next^{i+1} p \rightarrow \Next^{i+1} q)$. As $\Next$ behaves as a linear operator in THT, in fact $F^i \leftrightarrow \Next^i F$ is a THT tautology.

\begin{definition}[expanded program]
Given an STL-program $\Pi$ for signature $At$ we define its \emph{expanded program} $\Pi^\infty$ as the infinitary logic program containing all rules of the form \eqref{f:in0}, \eqref{f:in1} in $\Pi$ plus a rule $r^i$ per each rule $r$ of the form \eqref{f:d} in $\Pi$ and each integer value $i\geq 0$. \qed
\end{definition}

The program $\Pi^\infty_{\ref{ex1}}$ consists of \eqref{ex1.1}, \eqref{ex1.2} plus the infinite set of rules:
\begin{eqnarray*}
\Next^i q \wedge \neg \Next^{i+1} p & \rightarrow & \Next^{i+1} q \\
\Next^i r \wedge \neg \Next^{i+1} p & \rightarrow & \Next^{i+1} r \vee \Next^{i+1} q
\end{eqnarray*}
\noindent for $i \geq 0$. We can interpret the expanded program as an infinite, non-temporal program where the signature is the infinite set of atoms $\{\Next^i p \mid p \in At, \ i \geq 0\}$.

\begin{theorem}[Theorem 1 in~\cite{ACPV11}]\label{th:expand}
$\tuple{\T,\T}$ is a temporal equilibrium model of $\Pi$ iff $\{\Next^i p \ | \ p \in T_i, i \geq 0\}$ is a stable model of $\Pi^\infty$ under the (infinite) signature $\{\Next^i p \ | \ p \in At\}$.\qed
\end{theorem}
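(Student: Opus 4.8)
The statement to prove is Theorem~\ref{th:expand} (cited as Theorem 1 in~\cite{ACPV11}): that $\tuple{\T,\T}$ is a temporal equilibrium model of $\Pi$ iff the corresponding set of $\Next^i$-prefixed atoms is a stable model of the expanded program $\Pi^\infty$.

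The plan is to reduce both sides to a statement about $\mathbf{QTHT}$-satisfaction of the expanded program by here-and-there structures, and then invoke the linearity of $\Next$ in THT. \textbf{First} I would establish the key bridging lemma: for any $\mathbf{TQHT}$-structure $\M=\tuple{(D,\sigma),\H,\T}$ and any STL-program $\Pi$, we have $\M \models \Pi$ in $\mathbf{QTHT}$ if and only if the non-temporal here-and-there interpretation $\tuple{\mathit{Facts}(\H),\mathit{Facts}(\T)}$ over the expanded signature $\{\Next^i p \mid p\in At\}$ is a (non-temporal) HT-model of $\Pi^\infty$. This is proved by working through the three rule shapes \eqref{f:in0}--\eqref{f:d}. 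For shapes \eqref{f:in0} and \eqref{f:in1}, which contain no $\nec$, the key observation is that $\M \models \Next^i A$ iff $\Next^i A \in \mathit{Facts}(\H)$ (for the $\H$-component; analogously for $\T$), so that the THT truth conditions for $\wedge,\vee,\rightarrow,\bot$ on a rule body/head translate literally into the HT truth conditions on $\varphi^i$ read over the prefixed signature. For shape \eqref{f:d}, i.e. $r=\nec\varphi$, the semantics of $\nec$ gives $\M\models r$ iff $(\M,j)\models \varphi$ for all $j\geq 0$; since $(\M,j)\models\varphi$ iff $\M\models\varphi^j$ (which follows from the THT-tautology $F^i\leftrightarrow\Next^i F$ noted in the excerpt, together with a trivial induction shifting the sequence), this is exactly the condition that $\tuple{\mathit{Facts}(\H),\mathit{Facts}(\T)}$ satisfies every rule $r^j$, $j\geq 0$. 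Care must be taken that the implication clause in the $\mathbf{QTHT}$ semantics quantifies over $w\in\{\H,\T\}$; but this is precisely the non-temporal HT condition ``false in $H$-part or true in $T$-part, \emph{and} the same when everything is read in the $T$-part,'' and by Proposition~\ref{prop:1} the $\T$-part condition is automatic once the structure is a model, so the match is exact.

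\textbf{Second}, with this bridging lemma in hand, the theorem is almost immediate. The $\H$-to-$\mathit{Facts}$ map is a bijection between $\mathbf{TQHT}$-structures over a fixed domain sharing the $\T$-component and non-temporal HT-interpretations over the prefixed signature below $\mathit{Facts}(\T)$, and it is order-preserving: $\H<\T$ iff $\mathit{Facts}(\H)\subsetneq\mathit{Facts}(\T)$ (stated in the excerpt). Hence: $\tuple{\T,\T}$ is a temporal equilibrium model of $\Pi$ $\iff$ $\tuple{\T,\T}\models\Pi$ and no $\H<\T$ has $\tuple{\H,\T}\models\Pi$ $\iff$ (by the lemma) $\tuple{\mathit{Facts}(\T),\mathit{Facts}(\T)}$ is an HT-model of $\Pi^\infty$ and no proper HT-submodel of the form $\tuple{G,\mathit{Facts}(\T)}$ with $G\subsetneq\mathit{Facts}(\T)$ is an HT-model of $\Pi^\infty$ $\iff$ $\mathit{Facts}(\T)=\{\Next^i p\mid p\in T_i\}$ is an equilibrium (stable) model of $\Pi^\infty$. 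The last equivalence is just the definition of equilibrium model / stable model for the (infinitary) non-temporal program, so nothing further is needed.

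\textbf{The main obstacle} I anticipate is purely bookkeeping rather than conceptual: making the bridging lemma watertight for rules of type \eqref{f:d} where the $\nec$ interacts with the $\Next$ inside $\varphi$. One must verify that $(\M,j)\models \psi$ coincides with $\M\models\psi^j$ for propositional combinations $\psi$ of temporal facts $\Next^k A$ --- equivalently that $\Next^{j}(\Next^k A)$ and $\Next^{j+k}A$ behave identically and that $\Next$ commutes with $\wedge,\vee,\rightarrow$ in THT, which is the ``linear operator'' remark in the excerpt but needs to be applied at the level of the shift $(\M,j)$. A secondary subtlety is the quantifier-free assumption: the theorem as stated is about the propositional expansion, so I would either assume $\Pi$ is already ground (the variables having been dealt with separately, as the paper does) or, if working at the first-order level, note that $At$ is $At(D,P)$ and the domain $D$ and $\sigma$ ride along unchanged on both sides, so the reduction is uniform in $(D,\sigma)$ and the first-order quantifiers are interpreted identically in $\mathbf{QTHT}$ and in non-temporal quantified HT over the prefixed signature. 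Beyond these points the argument is a routine structural induction, and since the result is quoted verbatim from~\cite{ACPV11} I would keep the write-up short and refer there for the full propositional details.
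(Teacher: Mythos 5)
The paper itself does not prove Theorem~\ref{th:expand}: it is imported verbatim from~\cite{ACPV11} and stated without proof, so there is no in-paper argument to compare against. Your reconstruction is correct and follows the same route as the original source: a satisfaction-preserving correspondence between $\mathbf{TQHT}$-structures $\tuple{(D,\sigma),\H,\T}$ and HT-interpretations $\tuple{Facts(\H),Facts(\T)}$ over the prefixed signature (handling rules $\nec\varphi$ of type~\eqref{f:d} via $(\M,j)\models\varphi$ iff $\M\models\varphi^j$), followed by the order-preserving bijection $\H\mapsto Facts(\H)$ on interpretations below $\T$ to transfer the minimality condition, which is exactly how the equilibrium/stable-model equivalence is obtained. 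One minor remark: your appeal to Proposition~\ref{prop:1} to dispose of the ``$\T$-part'' of the implication clause is unnecessary (and not quite accurate as stated) --- the quantification over $w\in\{\H,\T\}$ in the $\mathbf{QTHT}$ clause for $\rightarrow$ is literally the two-part here-and-there condition, so the match there is definitional rather than a consequence of persistence.
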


\begin{proposition}\label{prop:unique}
Any normal positive STL-program $\Pi$ has a unique temporal stable model $\tuple{(D,\sigma),\T}$ which coincides with its $\leq$-least LTL-model. We denote $LM(\Pi)=Facts(\T)$. \qed
\end{proposition}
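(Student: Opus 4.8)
The plan is to reduce, through Theorem~\ref{th:expand}, the claim to the classical fact that a set of definite Horn clauses has a $\subseteq$-least classical model which is also its unique minimal model and, being negation-free, its unique stable model; and then to transport that model back to the temporal side. Throughout I treat $\Pi$ as ground: if $\Pi$ has variables one first fixes the domain $(D,\sigma)$ and replaces $\Pi$ by its instantiation over $\langle C\cup D,P\rangle$, after which the whole argument runs verbatim for that fixed $(D,\sigma)$, so the uniqueness claimed is uniqueness among temporal stable models with the chosen domain.

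The first step is the observation that, since $\Pi$ is \emph{normal} and \emph{positive}, every rule occurring in the expanded program $\Pi^\infty$ is a definite Horn clause over the infinite atom set $\widehat{At}\eqdef\{\Next^i p \mid p\in At,\ i\geq 0\}$: the rules \eqref{f:in0} and \eqref{f:in1} already in $\Pi$ have as body a finite conjunction of atoms of $\widehat{At}$ and as head a single such atom (normality forbids $\bot$ and disjunctions in the head, positivity forbids negation in the body), and the same holds for every shifted instance $r^i$ of a rule $r$ of type \eqref{f:d}. Thus $\Pi^\infty$ is a (possibly infinite) definite Horn program with \emph{finite} rule bodies, so the intersection of all its classical models is again a model and yields its $\subseteq$-least classical model $M=\bigcup_{n<\omega}T_{\Pi^\infty}^{\,n}(\emptyset)$, where $T_{\Pi^\infty}$ is the monotone (and, by finiteness of bodies, continuous) immediate-consequence operator. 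As $\Pi^\infty$ contains no negation, its reduct with respect to any interpretation is $\Pi^\infty$ itself, so its stable models are exactly its minimal classical models, and there is just one, namely $M$.

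The second step transports this back. Writing $M$ uniquely as $\{\Next^i p \mid p\in T_i\}$ determines the sequence $\T=\{T_i\}_{i\geq 0}$, and Theorem~\ref{th:expand} then gives that $\tuple{(D,\sigma),\T,\T}$ is the one and only temporal equilibrium model of $\Pi$, i.e.\ $\tuple{(D,\sigma),\T}$ is its unique temporal stable model. To identify $\T$ with the $\leq$-least LTL-model of $\Pi$, I would check that $\H\mapsto Facts(\H)$ is an order isomorphism from LTL-interpretations (ordered by $\leq$) onto subsets of $\widehat{At}$ (ordered by $\subseteq$) -- immediate from the equivalence $\H\leq\T\Leftrightarrow Facts(\H)\subseteq Facts(\T)$ noted in the text -- and that under it $\tuple{(D,\sigma),\H}$ is an LTL-model of $\Pi$ iff $Facts(\H)$ is a classical model of $\Pi^\infty$. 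The only point that is not bookkeeping is that $\tuple{(D,\sigma),\H}$ satisfies $\nec(B\wedge\Next B'\rightarrow\Next H')$ in LTL exactly when every shifted instance $r^i$ holds of $Facts(\H)$, which follows by unfolding the LTL semantics of $\nec$ together with the THT-tautology $F^i\leftrightarrow\Next^i F$ recorded before the definition of the expanded program. Given this correspondence, the $\leq$-least LTL-model of $\Pi$ is the one matching the $\subseteq$-least model $M$ of $\Pi^\infty$, i.e.\ it is $\T$; in particular $\tuple{(D,\sigma),\T}$ is an LTL-model of $\Pi$ (a total $\mathbf{TQHT}$-structure satisfies $\Pi$ iff it is an LTL-model, by the remark preceding Proposition~\ref{prop:1}), so $LM(\Pi)=Facts(\T)$ is well defined.

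I expect the main difficulty to be care rather than depth: pinning down the correspondence between the temporal reading of the $\nec$-rules and the infinite family $\{r^i\}_{i\geq 0}$, and making sure that $\Pi^\infty$, which is infinite both in its rules and in its signature, genuinely has a least model -- which is precisely why one uses the finiteness of the bodies (equivalently, the closure of Horn models under arbitrary intersection). The presence of variables adds nothing beyond the preliminary grounding remark, since everything after instantiation takes place over the fixed domain $(D,\sigma)$.
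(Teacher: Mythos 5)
Your proposal is correct and follows essentially the same route as the paper's proof: reduce via Theorem~\ref{th:expand} to the expanded program $\Pi^\infty$, note that it is an infinitary definite (positive, normal) program whose least Herbrand model is its unique stable model (the van Emden--Kowalski argument the paper cites), and transport this back through the bijection $\T \mapsto Facts(\T)$. Your extra care about grounding over the fixed domain and about identifying $\T$ with the $\leq$-least LTL-model only makes explicit what the paper leaves implicit.
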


\section{Safe Variables and Domain Independence} \label{sec:safe}

In this section we consider a definition of safe variables for temporal programs that removes the reference to static predicates.

\begin{definition}
\label{def:safe}
A splittable temporal formula $\varphi$ of type (\ref{f:in0}), (\ref{f:in1}) or (\ref{f:d}) is said to be \emph{safe} if, for any variable $x$ occurring in $\varphi$, there exists an atomic formula $p$ in $B$ or $ B'$ such that $x$ occurs in $p$. A formula $\forall x_1 \forall x_2 \ldots \forall x_n \psi$ is safe if the splittable temporal formula $\psi$ is safe.
\end{definition}

\noindent For instance, rules \eqref{f:car.1}-\eqref{f:car.5} are safe. A simple example of an unsafe rule is the splittable temporal formula:
\begin{eqnarray}
\top \rightarrow p(x) \label{f:unsafe}
\end{eqnarray}
\noindent where $x$ does not occur in the positive body. Although an unsafe rule does not always lead to lack of domain independence (see examples in~\cite{CPV09}) it is frequently the case. 
We prove next that domain independence is, in fact, guaranteed for safe STL-programs.

%

%

%
\begin{theorem}
\label{th:safe}
If $\varphi$ is a safe sentence and
$\langle (D,\sigma),\T,\T\rangle$ is a temporal equilibrium model of $\varphi$, then
$\T\res C=\T$ and $T_i \subseteq At(\sigma(C), P)$ for any $i \geq 0$.
\end{theorem}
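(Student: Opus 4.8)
The plan is to reduce the statement to a property of the expanded (infinitary, non-temporal) program $\Pi^\infty$, and then to argue the classical domain-independence fact for safe non-temporal programs at the level of each layer $\Next^i$. First I would invoke Theorem~\ref{th:expand}: $\langle(D,\sigma),\T,\T\rangle$ being a temporal equilibrium model of $\varphi$ amounts to $\{\Next^i p \mid p\in T_i,\ i\geq 0\}$ being a stable model of $\varphi^\infty$ over the signature $\{\Next^i p \mid p\in At(D,P)\}$. Since $\varphi$ is safe, every rule of type~\eqref{f:in0}, \eqref{f:in1}, \eqref{f:d} has all its variables appearing in the positive body $B$ or $\Next B'$; hence after the expansion each ground instance $r^i$ is a ground rule in which every atom in the head or in a negative literal uses only terms that already occur in some positive body atom. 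In other words, $\varphi^\infty$ is a safe (infinitary, propositional over the lifted signature) program in the usual ASP sense.

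The core step is then the standard argument that a stable model of a safe program cannot contain atoms built from domain elements that are not images of constants in $C$. Let $\T' = \T\res C$, i.e. $T_i' = T_i \cap At(\sigma(C),P)$; by construction $\langle(D,\sigma),\T',\T'\rangle$ is still an LTL-interpretation and $\T' \leq \T$. I would show $\langle(D,\sigma),\T',\T\rangle \models \varphi$, which by the equilibrium condition forces $\T' = \T$, giving exactly the conclusion $\T\res C = \T$ and $T_i \subseteq At(\sigma(C),P)$. To check $\langle(D,\sigma),\T',\T\rangle \models \varphi$ it suffices, using the universal-quantifier clause of the satisfaction relation, to verify that every ground instance of the matrix $\psi$ under an arbitrary assignment $x_j \mapsto d_j \in D$ is satisfied in the here-world $\T'$. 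Fix such a ground instance, a rule of the form $B \wedge \Next B' \wedge N \wedge \Next N' \rightarrow \Next H'$ (or its $\nec$-quantified/basic variants — handled the same way, point by point, via $(\M,j)$). If some positive body atom uses a domain element outside $\sigma(C)$, then that atom is absent from the corresponding $T_i'$, so the body already fails in $\T'$ and the implication holds vacuously. If, on the other hand, all positive body atoms lie in the appropriate $T_i'$, then by safety every variable $x_j$ actually occurring in the rule has been assigned a value $d_j = \sigma(t)$ for some term $t$ appearing in an atom of $\sigma(C)$, hence $d_j \in \sigma(C)$; therefore the entire ground instance — body, negative literals, and head — is built over $At(\sigma(C),P)$. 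Now I use that $\T$ itself is a model of $\varphi$ (Proposition~\ref{prop:1} gives the total-model satisfaction of this ground instance): $\langle(D,\sigma),\T,\T\rangle$ satisfies it, so either some negative literal fails in $\T$ — but negative literals are evaluated against $\T$ alone, so they fail in $\langle(D,\sigma),\T',\T\rangle$ as well — or the head disjunct $\Next^{i} H'$ holds in $\T$; since the head atom lies in $At(\sigma(C),P)$, membership in $T_i$ coincides with membership in $T_i'$, so the head holds in $\T'$ too. Either way the implication is satisfied by $\langle(D,\sigma),\T',\T\rangle$.

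Finally I would remark that the positive-body atoms in the here-world case are evaluated at $\H = \T'$, and we have just shown that whenever they all hold, the instance is over $\sigma(C)$ and the head/negative-literal reasoning goes through; the quantifier and the modal operators ($\Next$, $\nec$, $\pos$) are threaded through by applying the same argument uniformly at each shifted structure $(\M,j)$, which is legitimate because shifting commutes with the restriction $\res C$. This establishes $\langle(D,\sigma),\T',\T\rangle \models \varphi$ and hence $\T = \T'$.

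The main obstacle I anticipate is bookkeeping rather than conceptual: one must be careful that the satisfaction clause for $\rightarrow$ quantifies over both worlds $w \in \{\H,\T\}$, so "$\langle(D,\sigma),\T',\T\rangle \models \psi\text{-instance}$" really requires checking the implication with $w=\T'$ and with $w=\T$ separately — but the $w=\T$ case is immediate since $\langle(D,\sigma),\T,\T\rangle \models \varphi$ already, and only the $w=\T'$ case needs the safety argument above. A second point of care is that negative literals must always be read off $\T$ (never $\H$), which is exactly what makes the argument "monotone enough" to push the head down from $\T$ to $\T'$. Handling the three rule shapes \eqref{f:in0}--\eqref{f:d} and the $\nec$-abbreviation uniformly is routine once the single-instance case is settled.
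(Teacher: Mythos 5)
Your proposal is correct and follows essentially the same route as the paper: you restrict the here-component to $\T\res C$, use safety to show that $\langle (D,\sigma),\T\res C,\T\rangle$ still satisfies every ground instance of $\varphi$ (the body failing vacuously when a non-$\sigma(C)$ element appears, and otherwise all variables forced into $\sigma(C)$ so the head transfers from $\T$ to $\T\res C$), and then invoke minimality of the equilibrium model to conclude $\T\res C=\T$ — exactly the content of the paper's Lemma~\ref{safe_quantifier_free}, Proposition~\ref{safe_formula} and the short equilibrium argument. Your opening detour through Theorem~\ref{th:expand} and $\Pi^\infty$ is unnecessary (and that theorem is stated for ground programs), but it is not load-bearing: the argument you actually carry out works directly with the $\mathbf{TQHT}$ semantics, as in the paper.
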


Let $(D,\sigma)$ be a domain and $D'\subseteq D$ a finite subset; the grounding over $D'$ of a sentence $\varphi$, denoted by $\mathrm{Gr}_{D'}(\varphi)$, is defined recursively

\begin{eqnarray*}
\mathrm{Gr}_{D'}(p) & \eqdef & p \textrm{, where $p$ denotes any atomic formula}\\
\mathrm{Gr}_{D'}(\varphi_1\odot\varphi_2) & \eqdef & \mathrm{Gr}_{D'}(\varphi_1) \odot \mathrm{Gr}_{D'}(\varphi_2), \\
& & \textrm{with $\odot$  any binary operator in  $\{\wedge, \vee, \rightarrow\}$}\\
\mathrm{Gr}_{D'}(\forall x\varphi(x)) & \eqdef & \textstyle\bigwedge\limits_{d\in D'}\mathrm{Gr}_{D'}\varphi(d) \\
\mathrm{Gr}_{D'}(\exists x\varphi(x)) & \eqdef & \textstyle\bigvee\limits_{d\in D'}\mathrm{Gr}_{D'}\varphi(d) \\
\mathrm{Gr}_{D'}(\Next \varphi) & \eqdef & \textstyle\Next\mathrm{Gr}_{D'}(\varphi) \\
\mathrm{Gr}_{D'}(\nec \varphi) & \eqdef & \textstyle\nec \mathrm{Gr}_{D'}(\varphi) \\
\mathrm{Gr}_{D'}(\pos \varphi) & \eqdef & \textstyle\pos\mathrm{Gr}_{D'}(\varphi)
\end{eqnarray*}
\noindent

\begin{theorem}[Domain independence]
\label{th:safe_three}
Let $\varphi$ be  safe splittable temporal sentence. Suppose we expand
the language $\mathcal{L}$ by considering a set of constants $C' \supseteq
C$. A total $\mathbf{QTHT}$-model $\langle (D,\sigma),\T,\T\rangle$ is a
temporal equilibrium model of $ \mathrm{Gr}_{C'}(\varphi)$ if and only if
it is a temporal equilibrium model of $ \mathrm{Gr}_{C}(\varphi)$.
\end{theorem}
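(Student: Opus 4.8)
The plan is to reduce the statement to two observations about the groundings. \textbf{(Monotonicity.)} Writing $\varphi=\forall x_1\cdots\forall x_n\,\psi$, a straightforward induction on the definition of $\mathrm{Gr}$ shows that a $\mathbf{QTHT}$-structure satisfies $\mathrm{Gr}_{C'}(\varphi)$ iff it satisfies the instance $\psi[\bar d]$ for every tuple $\bar d=(d_1,\dots,d_n)\in\sigma(C')^n$, each such $\psi[\bar d]$ being an STF of the same shape as $\psi$; since $\sigma(C)\subseteq\sigma(C')$, the instances making up $\mathrm{Gr}_{C}(\varphi)$ are a subset of those making up $\mathrm{Gr}_{C'}(\varphi)$, so in particular every model of $\mathrm{Gr}_{C'}(\varphi)$ is a model of $\mathrm{Gr}_{C}(\varphi)$. \textbf{(Domain restriction.)} Every temporal equilibrium model $\langle(D,\sigma),\T,\T\rangle$ of $\mathrm{Gr}_{C'}(\varphi)$ --- and likewise of $\mathrm{Gr}_{C}(\varphi)$ --- satisfies $T_i\subseteq At(\sigma(C),P)$ for all $i$. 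Granting both, the theorem follows quickly. The instances of $\mathrm{Gr}_{C'}(\varphi)$ absent from $\mathrm{Gr}_{C}(\varphi)$ are those $\psi[\bar d]$ with some $d_j\in\sigma(C')\setminus\sigma(C)$; since $\varphi$ is safe, such a $d_j$ occurs in an atom of the body ($B$ or $B'$) of $\psi[\bar d]$, and by domain restriction that atom belongs to no $T_i$, hence to no $H_i$ for any $\H\leq\T$, so the (possibly $\Next$-shifted) body is false in both components of $\langle(D,\sigma),\H,\T\rangle$ and $\psi[\bar d]$ holds there trivially. Therefore, for any $\T$ with $T_i\subseteq At(\sigma(C),P)$ and any $\H\leq\T$, $\langle(D,\sigma),\H,\T\rangle\models\mathrm{Gr}_{C'}(\varphi)$ iff $\langle(D,\sigma),\H,\T\rangle\models\mathrm{Gr}_{C}(\varphi)$. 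Since, by domain restriction, every temporal equilibrium model of either grounding meets $T_i\subseteq At(\sigma(C),P)$, and since being a temporal equilibrium model of a theory depends only on which structures $\langle(D,\sigma),\H,\T\rangle$ with $\H\leq\T$ satisfy it, $\langle(D,\sigma),\T,\T\rangle$ is a temporal equilibrium model of $\mathrm{Gr}_{C'}(\varphi)$ iff it is one of $\mathrm{Gr}_{C}(\varphi)$.

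The only real work is the domain-restriction fact, which is Theorem~\ref{th:safe} transported to the ground theory $\mathrm{Gr}_{C'}(\varphi)$; note that applying Theorem~\ref{th:safe} directly to $\mathrm{Gr}_{C'}(\varphi)$ would give only the weaker bound $T_i\subseteq At(\sigma(C'),P)$, so the argument of that theorem must be rerun, this time exploiting safety of the \emph{original} $\varphi$. Concretely: let $\langle(D,\sigma),\T,\T\rangle$ be a temporal equilibrium model of $\mathrm{Gr}_{C'}(\varphi)$ and let $\H$ be obtained from $\T$ by deleting, at every time point, every atom that mentions an element of $\sigma(C')\setminus\sigma(C)$; then $\H\leq\T$, and $\H<\T$ unless the claim already holds. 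One checks $\langle(D,\sigma),\H,\T\rangle\models\psi[\bar d]$ for every $\bar d\in\sigma(C')^n$: if all $d_j\in\sigma(C)$, then $\psi[\bar d]$ involves only atoms of $At(\sigma(C),P)$, on which $\H$ and $\T$ assign identical truth values, so satisfaction transfers from $\langle(D,\sigma),\T,\T\rangle$ by a routine induction on subformulas; if some $d_j$ is fresh, safety places it in an atom of $B$ or $B'$, which has just been deleted, so the body is false in $\H$ and $\psi[\bar d]$ holds trivially. Hence $\langle(D,\sigma),\H,\T\rangle\models\mathrm{Gr}_{C'}(\varphi)$, which contradicts minimality of $\T$ unless $\H=\T$, that is, unless $T_i\subseteq At(\sigma(C),P)$. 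The $\mathrm{Gr}_{C}(\varphi)$ case is the same argument (it is just Theorem~\ref{th:safe} for the ground program $\mathrm{Gr}_{C}(\varphi)$).

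The step I expect to be most delicate is making ``the body is false'' uniform across the three STF shapes \eqref{f:in0}--\eqref{f:d}: in \eqref{f:in0} the fresh element lies in $B$ and is used at the initial instant; in \eqref{f:in1} and \eqref{f:d} safety only guarantees it occurs in $B$ \emph{or} in $B'$, so one falsifies $B$ at instant $j$ or $\Next B'$ at instant $j$ (equivalently $B'$ at instant $j+1$) accordingly, and for \eqref{f:d} this must hold simultaneously for every $j\geq 0$. One must also keep checking that the reduced sequence $\H$ still satisfies $\H\leq\T$, so that $\langle(D,\sigma),\H,\T\rangle$ remains a genuine $\mathbf{TQHT}$-structure, and note that safety enters at exactly one place: to guarantee that a domain element newly introduced by a grounding substitution stays visible in the body of the instance it produces.
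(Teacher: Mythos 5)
Your argument is correct, but it is organized differently from the paper's. The paper pivots through the non-ground sentence: it first proves that for safe $\varphi$ every temporal equilibrium model satisfies $\T=\T\res C$ (Theorem~\ref{th:safe}, via Lemma~\ref{safe_quantifier_free}), then that under this condition $\M\models\varphi$ iff $\M\models\mathrm{Gr}_C(\varphi)$ (Theorem~\ref{grounding}, via the quantifier-prefix inductions of Lemmas~\ref{safe&ground&one} and~\ref{safe&ground&two}), obtains Theorem~\ref{grounding_two} (TEM of $\varphi$ iff TEM of $\mathrm{Gr}_C(\varphi)$), and finally chains the three-way equivalence TEM of $\mathrm{Gr}_C(\varphi)$ iff TEM of $\varphi$ iff TEM of $\mathrm{Gr}_{C'}(\varphi)$, the last step using Theorem~\ref{th:safe} plus ``the obvious extension'' of Theorem~\ref{grounding} to $C'$. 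You instead never use $\varphi$ itself as a pivot: you compare the two ground theories directly, observing (i) $\mathrm{Gr}_{C'}(\varphi)$ is a superset of instances of $\mathrm{Gr}_C(\varphi)$, (ii) equilibrium models of either grounding contain only atoms over $\sigma(C)$, and (iii) under (ii) the extra instances are vacuously satisfied because safety puts the fresh element into a body atom that is false in both components. The underlying ideas coincide with the paper's two key lemmas (safety confines equilibrium models to $\sigma(C)$-atoms; instances over fresh elements become trivially true there), but your decomposition avoids the lifting lemmas over quantifier prefixes and the intermediate equivalence with the un-grounded sentence, at the price of not producing the reusable statement of Theorem~\ref{grounding_two}; your remark that Theorem~\ref{th:safe} applied naively to $\mathrm{Gr}_{C'}(\varphi)$ only yields the bound $At(\sigma(C'),P)$, so the minimization argument must be rerun using safety of the original $\varphi$, is exactly right and is the crux of the ground-level route.

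Two small repairs. First, in your domain-restriction argument you delete from $\T$ only atoms mentioning elements of $\sigma(C')\setminus\sigma(C)$; minimality then yields only that no $T_i$-atom mentions such an element, not the stated $T_i\subseteq At(\sigma(C),P)$ (atoms over $D\setminus\sigma(C')$ survive). Either delete all atoms outside $At(\sigma(C),P)$, i.e.\ take $\H=\T\res C$ as the paper does (your two cases go through unchanged), or note that the weaker property is all your main equivalence actually uses. Second, your case split should be on whether the instance $\psi[\bar d]$ actually mentions a fresh element: if a fresh $d_j$ is substituted for a variable not occurring in $\psi$, safety gives no body atom for it, but then the instance mentions only $\sigma(C)$-atoms and falls under your first case.
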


\section{Derivable ground facts} \label{sec:ground}

In this section we present a technique for grounding safe temporal programs based on the construction of a positive normal ASP program with variables. 
The method is based on the idea of \emph{derivable} ground temporal facts for an STL-program $\Pi$. This set, call it $\Delta$, will be an upper estimation of the credulous consequences of the program, that is, $CredFacts(\Pi) \subseteq \Delta$. Of course, the ideal situation would be that $\Delta = CredFacts(\Pi)$, but the set $CredFacts(\Pi)$ requires the temporal stable models of $\Pi$ and these (apart from being infinite sequences) will not be available at grounding time. In the worst case, we could choose $\Delta$ to contain the whole set of possible temporal facts, but this would not provide relevant information to improve grounding. So, we will try to obtain some superset of $CredFacts(\Pi)$ as small as possible, or if preferred, to obtain the largest set of \emph{non-derivable} facts we can find. Note that a non-derivable fact $\Next^i p \not\in \Delta$ satisfies that $\Next^i p \not\in CredFacts(\Pi)$ and so, by Proposition~\ref{prop:cum}, $\Pi \cup \{\neg \! \Next^i p\}$ is equivalent to $\Pi$, that is, both theories have the same set of temporal equilibrium models. This information can be used to simplify the ground program either by removing rules or literals.

We begin defining several transformations on STL-programs. For any temporal rule $r$, we define $r^\wedge$ as the set of rules:
\begin{itemize}
\item If $r$ has the form \eqref{f:in0} then
$r^\wedge \eqdef \{B \rightarrow p \ | \ \hbox{atom } p \ \hbox{occurs in } H\}$
\item If $r$ has the form \eqref{f:in1} then
$r^\wedge \eqdef \{B \wedge \Next B' \rightarrow \Next p \ | \ \hbox{atom } p \ \hbox{occurs in } H'\}$
\item If $r$ has the form \eqref{f:d} then
$r^\wedge \eqdef \{\nec(B \wedge \Next B' \rightarrow \Next p) \ | \ \hbox{atom } p \ \hbox{occurs in } H'\}$
\end{itemize}
In other words, 
$r^\wedge$ will imply \emph{all} the original disjuncts in the disjunctive head of $r$. It is interesting to note that for any rule $r$ with an empty head ($\bot$) this definition implies $r^\wedge = \emptyset$. Program $\Pi^\wedge$ is defined as the union of $r^\wedge$ for all rules $r \in \Pi$. As an example, $\Pi_{\ref{ex1}}^\wedge$ consists of the rules:
\[
\begin{array}{c@{\hspace{20pt}}c@{\hspace{20pt}}c}
\begin{array}{rcl}
\top & \rightarrow & q \\
 q  & \rightarrow & \Next p \\
\end{array}
&
\begin{array}{rcl}
\nec ( q & \rightarrow & \Next q )\\ \\
\end{array}
&
\begin{array}{rcl}
\nec ( r & \rightarrow & \Next r ) \\
\nec ( r & \rightarrow & \Next q )
\end{array}
\end{array}
\]
\noindent whereas $\Pi^\wedge_{\ref{ex:cars}}$ would be the program:
\begin{eqnarray}
\nec (\ Driveto(x,a) & \rightarrow & \Next At(x,a) \ ) \label{f:carb.1}\\
\nec (\ At(x,a) \wedge Road(a,b) & \rightarrow & Driveto(x,b) \ )  \label{f:carb.2}\\
\nec (\ At(x,a) \wedge Road(a,b) & \rightarrow & NoDriveto(x,b) \ )  \label{f:carb.2b}\\
\nec (\ At(x,a) & \rightarrow & \Next At(x,a) \ )  \label{f:carb.3}\\
\nec (\ At(x,b) \wedge City(a) \wedge a \neq b & \rightarrow & NoAt(x,a) \ )  \label{f:carb.4}
\end{eqnarray}


Notice that, by definition, $\Pi^\wedge$ is always a positive normal STL-program and, by Proposition~\ref{prop:unique}, it has a unique temporal stable model, $LM(\Pi^\wedge)$.

\begin{proposition}\label{prop:cred_one}
For any STL-program $\Pi$, $CredFacts(\Pi) \subseteq LM(\Pi^\wedge)$.\qed
\end{proposition}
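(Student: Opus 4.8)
\medskip
\noindent\textbf{Proof plan.}
The plan is to push the statement down to the ground (propositional) level via Theorem~\ref{th:expand}, and there reduce it to a short lattice-theoretic fact about ``head-splitting'' positive programs. First I would fix a single temporal stable model $\tuple{(D,\sigma),\T}$ of $\Pi$; since $CredFacts(\Pi)=\bigcup_{\tuple{(D,\sigma),\T}\in TSM(\Pi)}Facts(\T)$, it suffices to show $Facts(\T)\subseteq LM(\Pi^\wedge)$. Because $\Pi$ is safe, so is $\Pi^\wedge$ (safety only inspects $B$ and $B'$), hence by Theorems~\ref{th:safe} and~\ref{th:safe_three} nothing is lost by assuming $\Pi$ and $\Pi^\wedge$ ground (replacing them by $\mathrm{Gr}_C(\Pi)$ and $\mathrm{Gr}_C(\Pi^\wedge)=\mathrm{Gr}_C(\Pi)^\wedge$). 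Then, by Theorem~\ref{th:expand}, $X\eqdef Facts(\T)=\{\Next^i p\mid p\in T_i,\ i\geq 0\}$ is a stable model of the infinitary program $\Pi^\infty$, while by Proposition~\ref{prop:unique} together with Theorem~\ref{th:expand} the set $LM(\Pi^\wedge)$ is the $\subseteq$-least model of $(\Pi^\wedge)^\infty$, which is an infinitary \emph{positive normal} (Horn) program. A routine case analysis on the three rule shapes \eqref{f:in0}-\eqref{f:d} shows that $(\Pi^\wedge)^\infty$ is precisely the program obtained from $\Pi^\infty$ by deleting all negative literals and splitting every disjunctive head into separate single-atom heads (dropping rules with empty head); call this program $(\Pi^\infty)^\wedge$. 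So the goal reduces to $X\subseteq\mathrm{lm}\big((\Pi^\infty)^\wedge\big)$, where $\mathrm{lm}$ denotes the $\subseteq$-least model of a Horn program.

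The key ingredient is the following lemma, stated for an arbitrary (possibly infinitary) positive disjunctive program $Q$: every $\subseteq$-minimal model $Y$ of $Q$ satisfies $Y\subseteq\mathrm{lm}(Q^\wedge)$, where $Q^\wedge$ replaces each rule $B\rightarrow p_1\vee\cdots\vee p_k$ by the Horn rules $B\rightarrow p_j$ and drops rules with head $\bot$. I would prove it by setting $M\eqdef\mathrm{lm}(Q^\wedge)$ and showing $Y\cap M$ is still a model of $Q$: for any rule $B\rightarrow H$ of $Q$ with $B\subseteq Y\cap M$, from $B\subseteq Y$ and $Y\models Q$ we obtain some atom $p\in H\cap Y$, and from $B\subseteq M$ and $M\models Q^\wedge$ we obtain $H\subseteq M$, so $p\in H\cap Y\cap M$ and the rule is satisfied by $Y\cap M$. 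Since $Y\cap M\subseteq Y$ and $Y$ is $\subseteq$-minimal, $Y\cap M=Y$, i.e.\ $Y\subseteq M$.

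To finish, I would apply this lemma to the Gelfond--Lifschitz reduct $R\eqdef(\Pi^\infty)^X$ (delete each rule of $\Pi^\infty$ whose negative part is contradicted by $X$, and strip the negative parts from the survivors), which is a positive disjunctive program of which $X$ is a $\subseteq$-minimal model precisely because $X$ is a stable model of $\Pi^\infty$. The lemma gives $X\subseteq\mathrm{lm}(R^\wedge)$. Moreover $R^\wedge\subseteq(\Pi^\infty)^\wedge$, since forming $R$ only removes negative literals (which $(\cdot)^\wedge$ removes unconditionally) or deletes whole rules, so $R^\wedge$ contains no rule absent from $(\Pi^\infty)^\wedge$; hence by monotonicity of Horn least models $\mathrm{lm}(R^\wedge)\subseteq\mathrm{lm}\big((\Pi^\infty)^\wedge\big)=LM(\Pi^\wedge)$. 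Chaining the inclusions, $Facts(\T)=X\subseteq LM(\Pi^\wedge)$, and since $\tuple{(D,\sigma),\T}$ was an arbitrary temporal stable model, $CredFacts(\Pi)\subseteq LM(\Pi^\wedge)$.

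Most of the work is bookkeeping: verifying that head-splitting and negative-literal deletion commute with the expansion $(\cdot)^\infty$ and with taking the reduct, and checking that the relevant least-model / minimal-model facts hold in the infinitary setting of~\cite{ACPV11}. The one genuinely non-routine step is the head-splitting lemma, and its proof is the short intersection argument above; that is where I expect any difficulty to lie.
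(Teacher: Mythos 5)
Your argument is essentially correct, but it follows a genuinely different route from the paper. The paper never passes through the expanded program $\Pi^\infty$ or the Gelfond--Lifschitz reduct: it invokes the splitting theorem for splittable temporal programs (Theorem~\ref{th:splitseq}, imported from~\cite{ACPV11}) to decompose a temporal equilibrium model $\T$ of $\Pi$ and the least model $LM(\Pi^\wedge)$ into their per-state slices $slice(\Pi,\T,i)$ and $slice(\Pi^\wedge,LM(\Pi^\wedge),i)$, and then applies, slice by slice, Lemma~\ref{martin} --- which is your head-splitting lemma in HT form (``any equilibrium model of a disjunctive program with negation is contained in any model of its positive normal companion''), proved by the very same intersection trick $\tuple{T\cap J,T}\models\Pi$ plus minimality --- together with the inclusion $slice(\Pi,\T,i)^\wedge\subseteq slice(\Pi^\wedge,LM(\Pi^\wedge),i)$. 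You instead work globally: Theorem~\ref{th:expand} turns $Facts(\T)$ into a stable model of the infinitary program $\Pi^\infty$, you observe $(\Pi^\infty)^\wedge=(\Pi^\wedge)^\infty$, prove the splitting lemma for minimal models of arbitrary positive disjunctive programs, apply it to the reduct $(\Pi^\infty)^X$, and finish with monotonicity of Horn least models. Your version buys self-containedness (no slice machinery, which this note does not even define, and a lemma stated at a natural level of generality), at the price of relying on the equivalence between equilibrium models and reduct-based stable models for infinitary disjunctive programs, which the paper's direct HT argument avoids; the paper's version buys brevity by reusing Theorem~\ref{th:splitseq}, but hides the bookkeeping inside the slice notation.

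One slip worth fixing: the proposition is stated for \emph{any} STL-program, not only safe ones, so your appeal to Theorems~\ref{th:safe} and~\ref{th:safe_three} to justify passing to $\mathrm{Gr}_C(\Pi)$ is not available in general. The repair is immediate: ground over the whole domain $D$ using Proposition~\ref{global_grounding} (which needs no safety) before applying Theorem~\ref{th:expand}; nothing else in your argument changes. Also state explicitly that constraints are handled vacuously in the intersection lemma ($B\subseteq Y\cap M\subseteq Y$ would contradict $Y\models Q$), which your phrasing only covers implicitly.
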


Unfortunately, using $\Delta=LM(\Pi^\wedge)$ as set of derivable facts is unfeasible, since it contains infinite temporal facts corresponding to an ``infinite run'' of the transition system described by $\Pi^\wedge$. 
Instead, we will adopt a compromise solution taking a superset of $LM(\Pi^\wedge)$ extracted from a new theory, $\Gamma_\Pi$. This theory will collapse all the temporal facts from situation $2$ on, so that all the states $T_i$ for $i\geq 2$ will be repeated\footnote{\label{footn} Note that rules of the form \eqref{f:in0} and \eqref{f:in1} are not in the scope of $\nec$ and so may provide an irregular behaviour for atoms at situations $0$ and $1$. In a theory only consisting of rules like~\eqref{f:d} we could collapse all situations from $i=0$ on since they would follow a regular pattern.}. We define $\Gamma_\Pi$ as the result of replacing each rule $\nec(B \wedge \Next B' \rightarrow \Next p)$ in $\Pi^\wedge$ by the formulas:
\begin{eqnarray}
 B \wedge \Next B' & \rightarrow & \Next p \label{f:copy1} \\
\Next B \wedge \Next^2 B' & \rightarrow & \Next^2 p \label{f:copy2} \\
\Next^2 B \wedge \Next^2 B' & \rightarrow & \Next^2 p \label{f:copy3}
\end{eqnarray}
\noindent and adding the axiom schema:
\begin{eqnarray}
\Next^2 \nec (p \leftrightarrow \Next p) \label{f:repeat}
\end{eqnarray}
\noindent for any ground atom $p \in At(D,P)$ in the signature of $\Pi$. As we can see, \eqref{f:copy1} and \eqref{f:copy2} are the first two instances of the original rule $\nec(B \wedge \Next B' \rightarrow \Next p)$ corresponding to situations $i=0$ and $i=1$. Formula \eqref{f:copy3}, however, differs from the instance we would get for $i=2$ since, rather than having $\Next^3 B'$ and $\Next^3 p$, we use $\Next^2 B'$ and $\Next^2 p$ respectively. This can be done because axiom \eqref{f:repeat} is asserting that from situation 2 on all the states are repeated.

In the cars example, for instance, \eqref{f:carb.1} from $\Pi^\wedge_{\ref{ex:cars}}$ would yield the three rules:
\begin{eqnarray*}
Driveto(x,a) & \rightarrow & \Next At(x,a) \\
\Next Driveto(x,a) & \rightarrow & \Next^2 At(x,a) \\
\Next^2 Driveto(x,a) & \rightarrow & \Next^2 At(x,a)
\end{eqnarray*}
It is not difficult to see that axiom \eqref{f:repeat} implies that checking that some $\M$ is a temporal equilibrium model of $\Gamma_\Pi$ is equivalent to checking that $\{\Next^i p \ | \ p \in T_i \, , \, i=0,1,2\}$ is a stable model of $\Gamma_\Pi \setminus \{ \eqref{f:repeat}\}$ and fixing $T_i=T_2$ for $i \geq 3$. This allows us to exclusively focus on the predicate extents in $T_0, T_1$ and $T_2$, so we can see the $\nec$-free program $\Gamma_\Pi \setminus \{ \eqref{f:repeat}\}$ as a positive normal ASP (i.e., non-temporal) program for the propositional signature $\{p, \Next p, \Next^2 p \ | \ p \in At(D,P)\}$ that can be directly fed to an ASP grounder, after some simple renaming conventions.

\begin{theorem}\label{th:delta}
$\Gamma_\Pi$ has a least LTL-model, $LM(\Gamma_\Pi)$ which is a superset of $LM(\Pi^\wedge)$.
\end{theorem}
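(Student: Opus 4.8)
The plan is to argue in three steps: first show that $\Gamma_\Pi$ has a $\le$-least LTL-model $\langle(D,\sigma),\U,\U\rangle$; then show that this least model is itself an LTL-model of $\Pi^\wedge$; and finally conclude by minimality of $LM(\Pi^\wedge)$.

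For the first step, I would observe that once the axiom schema \eqref{f:repeat} is rewritten as the two implications $\Next^2\nec(p\rightarrow\Next p)$ and $\Next^2\nec(\Next p\rightarrow p)$, every formula of $\Gamma_\Pi$ is a \emph{Horn temporal rule}: a conjunction of temporal facts implying a single temporal fact, possibly under $\nec$ or under a prefix of $\Next$'s. Exactly as in the reasoning behind Proposition~\ref{prop:unique} — or, equivalently, by viewing $\Gamma_\Pi$ through its expansion as an infinitary positive propositional Horn program over the atoms $\{\Next^i p \mid p\in At(D,P),\, i\geq 0\}$ — the class of LTL-models of $\Gamma_\Pi$ is non-empty (the sequence in which every atom holds at every situation is trivially a model) and closed under arbitrary pointwise intersection; hence it has a $\le$-least element, and we take $LM(\Gamma_\Pi)$ to be the associated set of temporal facts of that model $\langle(D,\sigma),\U,\U\rangle$.

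For the second step I would verify $\langle(D,\sigma),\U,\U\rangle\models\Pi^\wedge$ directly. Since $\U$ satisfies \eqref{f:repeat}, we have $U_i=U_2$ for all $i\geq 2$, so $\U$ is completely determined by $U_0,U_1,U_2$. The rules of $\Pi^\wedge$ of the forms \eqref{f:in0} and \eqref{f:in1} occur verbatim in $\Gamma_\Pi$, so $\U$ satisfies them. For a rule $\nec(B\wedge\Next B'\rightarrow\Next p)$ of $\Pi^\wedge$ one must check, for every $i\geq 0$, that $B$ holding at $U_i$ together with $B'$ holding at $U_{i+1}$ forces $p$ at $U_{i+1}$: the instance $i=0$ is exactly \eqref{f:copy1}; the instance $i=1$ is exactly \eqref{f:copy2}; and for $i\geq 2$, using $U_i=U_{i+1}=U_2$, the required instance collapses to ``$B,B'$ at $U_2$ imply $p$ at $U_2$,'' which is precisely \eqref{f:copy3}. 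Thus $\U$ is an LTL-model of $\Pi^\wedge$. Finally, since $\Pi^\wedge$ is a positive normal STL-program, Proposition~\ref{prop:unique} gives it a $\le$-least LTL-model $\langle(D,\sigma),\T,\T\rangle$ with $LM(\Pi^\wedge)=Facts(\T)$, and because $\U$ is a model of $\Pi^\wedge$ we get $\T\leq\U$, i.e.\ $Facts(\T)\subseteq Facts(\U)$, that is $LM(\Pi^\wedge)\subseteq LM(\Gamma_\Pi)$.

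None of these steps is deep. The only point requiring genuine care is the case analysis in the middle step, and specifically the verification that the single rule \eqref{f:copy3}, which uses $\Next^2 B'$ and $\Next^2 p$ instead of $\Next^3 B'$ and $\Next^3 p$, really does cover \emph{all} instances $i\geq 2$ of the original $\nec$-rule — which it does precisely because \eqref{f:repeat} identifies all states from situation $2$ onwards. A secondary subtlety worth flagging is that $\Gamma_\Pi$ is not literally an STL-program (it contains $\Next^2$ and a $\nec$ nested under $\Next^2$), so Proposition~\ref{prop:unique} cannot be invoked for $\Gamma_\Pi$ itself; this is why the existence of its least model must be obtained separately, by reduction to an infinitary positive propositional Horn program as in the first step.
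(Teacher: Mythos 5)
Your proof is correct, but it takes a genuinely different route from the paper's. The paper argues constructively: it starts from the unique temporal equilibrium model $\tuple{\T,\T}$ of $\Pi^\wedge$, builds an explicit candidate $\tuple{\D,\D}$ for the (least) model of $\Gamma_\Pi$ (with $D_0=T_0$, $D_1=T_1$, $D_2$ defined as the stable model of the collapsed positive program obtained from the rules of shape \eqref{f:copy3} together with the slice program at index $1$, and $D_i=D_2$ for $i\geq 3$), and then proves $T_i\subseteq D_i$ slice by slice, invoking Theorem~\ref{th:splitseq} and Lemma~\ref{martin} (plus Proposition~\ref{prop:cred_one} for $i\leq 2$). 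You instead obtain the least model of $\Gamma_\Pi$ abstractly, by rewriting \eqref{f:repeat} as two Horn implications and reusing the infinitary positive-program argument behind Proposition~\ref{prop:unique}, and then get the inclusion by showing that this least model is itself an LTL-model of $\Pi^\wedge$: your case split ($i=0$ via \eqref{f:copy1}, $i=1$ via \eqref{f:copy2}, and $i\geq 2$ via the collapse $U_i=U_2$ and \eqref{f:copy3}) is exactly the point that needs checking, and it goes through; minimality of $LM(\Pi^\wedge)$ then finishes the proof. Your argument is shorter and bypasses the splitting machinery of Theorem~\ref{th:splitseq} entirely, whereas the paper's construction additionally delivers an explicit description of $LM(\Gamma_\Pi)$ in terms of the three states $D_0,D_1,D_2$, which is precisely the form exploited in Section~\ref{sec:ground} when the collapsed program is fed to an ASP grounder. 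You are also right to flag that $\Gamma_\Pi$ is not itself an STL-program, so Proposition~\ref{prop:unique} cannot be applied to it directly; your separate Horn-style existence argument correctly fills that gap.
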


In other words $CredFacts(\Pi) \subseteq LM(\Pi^\wedge) \subseteq LM(\Gamma_\Pi) = \Delta$, i.e., we can use $LM(\Gamma_\Pi)$ as set of derivable facts and simplify the ground program accordingly. 
To this aim, a slight adaptation is further required. Each rule in $\Pi$ like~\eqref{f:d} has the form $\nec \alpha$ and any predicate $p$ in $\alpha$ is \emph{implicitly} affected (Theorem~\ref{th:expand}) by the extension of $\Next^2 p$ in $LM(\Gamma_\Pi)$. In order to properly ground the extensions for $p, \Next p$ and $\Next^2 p$ we replace each $\nec \alpha$ by the equivalent conjunction of the three rules $\alpha$, $\Next \alpha$ and $\nec \Next^2 \alpha$. For instance, \eqref{f:car.2} would be replaced by:
\begin{eqnarray}
 At(x,a) \wedge Road(a,b) & \rightarrow & Driveto(x,b) \vee NoDriveto(x,b) \hspace{15pt} \\
\Next At(x,a) \wedge \Next Road(a,b) & \rightarrow & \Next Driveto(x,b) \nonumber \\
& & \vee \Next NoDriveto(x,b) \\
\nec (\ \Next^2 At(x,a) \wedge \Next^2 Road(a,b) & \rightarrow & \Next^2 Driveto(x,b) \nonumber \\
 & & \vee \Next^2 NoDriveto(x,b) \ ) \label{f:1}
\end{eqnarray}
\noindent and then check the possible extents for the positive bodies we get from the set of derivable facts $\Delta=LM(\Gamma_\Pi)$. For example, for the last rule, we can make substitutions for $x, a$ and $b$ using the extents of $\Next^2 At(x,a)$ and $\Next^2 Road(a,b)$ we have in $\Delta$. However, this still means making a join operation for both predicates. We can also use the ASP grounder for that purpose by just adding a rule that has as body, the positive body of the original temporal rule $r$, and as head, a new auxiliary predicate $Subst_r(x,a,b)$ referring to all variables in the rule. In the example, for rule \eqref{f:1} we would include in our ASP program:
\begin{eqnarray*}
\Next^2 At(x,a) \wedge \Next^2 Road(a,b) \rightarrow Subst_{\eqref{f:1}}(x,a,b)
\end{eqnarray*}

In this way, each tuple of $Subst_r(x_1,\dots,x_n)$ directly points out the variable substitution to be performed on the temporal rule. 

For instance, in the small instance case described of our example ($2$ cars and $6$ cities) we reduce the number of generated ground rules in the scope of `$\nec$' from $160$ using the previous {\tt STeLP} grounding method to $62$. The reader may easily imagine that the higher degree of cities interconnection, the smaller obtained reduction of rule instances. Although an exhaustive experimentation is still ongoing work, a reduction of this kind is very promising. In our initial experiments, the grounding performed on $\Gamma_\Pi$ (whose generation is polynomial) does not constitute a significant time increase, whereas the computation of temporal stable models is drastically improved by the reduction of ground rules\footnote{The complexity of deciding whether a temporal stable model exists is {\textsc EXPSPACE}-complete in the general case~\cite{BP15}.}.


\section{Conclusions} \label{sec:conc}

We have improved the grounding method for temporal logic programs with variables in different ways. First, we provided a safety condition that directly corresponds to extrapolating the usual concept of safe variable in ASP. In this way, any variable occurring in a rule is considered to be safe if it also occurs in the positive body of the rule, regardless the possible scope of temporal operators and removing the previous dependence on the use of static predicates.

We have proved that this safety condition suffices to guarantee the property of \emph{domain independence} by which computing the (temporal) stable models is insensitive to the possible addition of new arbitrary constants to the universe. 


We have also designed a method for grounding the temporal logic program that consists in constructing a non-temporal normal positive program with variables that is fed to an ASP solver to directly obtain the set of variable substitutions to be performed for each rule. The proposed method allows reducing in many cases the number of ground temporal rules generated as a result. 

The current note contains formal results, providing the correctness (with respect to domain independence) of the safety condition and the method for grounding safe programs. Regarding implementation, a stand-alone prototype for proving examples like the one in the paper has been constructed, showing promising results. The immediate next step is incorporating the new grounding method inside {\tt STeLP} and analysing its performance on benchmark scenarios. 

\paragraph{Acknowledgements}{This research was partially supported by Spanish MINECO project TIN2013-42149-P and Xunta de Galicia GPC 2013/070.}

\bibliographystyle{acmtrans}
\bibliography{refs}

\newpage
\section*{Appendix. Proofs}

\begin{proofof}{Proposition~\ref{prop:unique}}
Consider the expanded program $\Pi^\infty$. This is an infinitary positive (non-temporal) logic program. Using the well-known results by~\cite{vEm76}, we know it has a least Herbrand model $LM(\Pi^\infty)$ call it $I$, which may contain an infinite set of atoms in the signature of ground temporal facts $\{ \Next^i p \ | \ p \in At, i\geq 0 \}$, where $At$ is the original signature of $\Pi$. Furthermore, as $\Pi^\infty$ is positive, its unique stable model is precisely $I$. Given any set of ground temporal facts $I$ we can establish a one-to-one correspondence to an LTL-interpretation $\mathbf{I}$ with $Facts(\mathbf{I})=I$. By Theorem~\ref{th:expand}, $J$ is a stable model of $\Pi^\infty$ iff $\mathbf{J}$, with $Facts(\mathbf{J})=J$ is a temporal stable model of $\Pi$. Finally, as $I$ is the unique stable model of $\Pi^\infty$ we get that $\mathbf{I}$ is the only temporal stable model of $\Pi$.
\end{proofof}

\vspace{10pt}
A variable assignment 
$\mu$ in $(D,\sigma)$ is a mapping from the set of variables to $D$. 
If $\varphi\in\Lang$ has free-variables, $\varphi^\mu$ is the closed formula obtained by replacing every free variable $x$ by
$\mu(x)$.

\begin{lemma}
\label{safe_quantifier_free}
Let $\varphi$ be a splittable temporal formula and $\mu$ a variable assignment in $(D, \sigma )$. If $\varphi$ is safe, then if follows that:
$$\langle (D,\sigma),\T,\T\rangle\models \varphi^\mu \mbox { implies } \langle (D,\sigma),\T\res C,\T\rangle\models \varphi^\mu.$$
\end{lemma}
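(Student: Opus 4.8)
The plan is to exploit the very rigid shape of splittable temporal formulas together with safety to show that once the positive body is satisfied in the ``here'' component $\T\res C$, every variable of $\varphi$ is forced to take a value in $\sigma(C)$, which then makes the head behave identically in $\T$ and in $\T\res C$. First I would unfold the implication clause: writing $\varphi$ (or, for type~\eqref{f:d}, the formula under $\nec$) as $\mathrm{Body}\to\mathrm{Head}$, checking $\langle(D,\sigma),\T\res C,\T\rangle\models\varphi^\mu$ requires, for both $w\in\{\T\res C,\T\}$, that $\langle(D,\sigma),w,\T\rangle\not\models\mathrm{Body}^\mu$ or $\langle(D,\sigma),w,\T\rangle\models\mathrm{Head}^\mu$; for $w=\T$ this is precisely the hypothesis $\langle(D,\sigma),\T,\T\rangle\models\varphi^\mu$, so the whole proof reduces to the case $w=\T\res C$, i.e.\ to deriving $\langle(D,\sigma),\T\res C,\T\rangle\models\mathrm{Head}^\mu$ from $\langle(D,\sigma),\T\res C,\T\rangle\models\mathrm{Body}^\mu$. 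For type~\eqref{f:d}, since $(\T\res C,j)=(\T,j)\res C$ for every $j$, it suffices to run the type-\eqref{f:in1} argument at each shift $(\T,j)$ separately (the argument below never uses that the sequence starts at $0$).

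The heart of the matter is the following observation. Since $B$ and $B'$ are conjunctions of atoms, $\langle(D,\sigma),\T\res C,\T\rangle\models (B\wedge\Next B')^\mu$ means that every atom of $B^\mu$, after applying $\sigma$ to its arguments, lies in $T_0\res C=T_0\cap At(\sigma(C),P)$, and every atom of $B'^\mu$ lies in $T_1\res C\subseteq At(\sigma(C),P)$ (using that $\Next$ commutes with $\cdot\res C$). Membership in $At(\sigma(C),P)$ forces all arguments into $\sigma(C)$; as $\sigma$ is the identity on $D$, this yields $\mu(x)\in\sigma(C)$ for every variable $x$ occurring in $B$ or $B'$. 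By safety every variable of $\varphi$ occurs in $B$ or $B'$ (for type~\eqref{f:in0} there is no $B'$), hence $\mu(x)\in\sigma(C)$ for \emph{all} variables of $\varphi$, in particular for those appearing in $H$ or $H'$.

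To finish, from $\langle(D,\sigma),\T\res C,\T\rangle\models\mathrm{Body}^\mu$ and Proposition~\ref{prop:1} I get $\langle(D,\sigma),\T,\T\rangle\models\mathrm{Body}^\mu$, so by the hypothesis $\langle(D,\sigma),\T,\T\rangle\models\mathrm{Head}^\mu$ (if the head is $\bot$ this is a contradiction, so that subcase cannot arise and there is nothing to prove). Thus some disjunct atom $q^\mu$ of the head holds at $\langle(D,\sigma),\T,\T\rangle$, i.e.\ the ground atom obtained from $q^\mu$ by applying $\sigma$ lies in $T_0$ — or in $T_1$ for a $\Next$-head. Since the constants of $q$ belong to $C$ and its variables are among those of $\varphi$, which $\mu$ maps into $\sigma(C)$, that ground atom lies in $At(\sigma(C),P)$, hence in $T_0\res C$ (resp.\ $T_1\res C$), giving $\langle(D,\sigma),\T\res C,\T\rangle\models\mathrm{Head}^\mu$ and completing the $w=\T\res C$ case.

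The only genuinely delicate point is the second paragraph: recognizing that satisfaction of the \emph{positive} body in the ``here'' world already confines all relevant variables to $\sigma(C)$ — this is exactly where the safety hypothesis is used. Everything else (the implication-clause bookkeeping, the appeal to monotonicity, transferring a single head disjunct, and reducing $\nec$ to its shifts) is routine.
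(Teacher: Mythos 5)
Your proof is correct and takes essentially the same route as the paper's: both hinge on the observation that, by safety, every variable of the rule occurs in the positive body $B$ or $B'$, so satisfaction of the positive body in the restricted here-world confines all variables (hence all head arguments) to $\sigma(C)$, and the witnessing head disjunct transfers from $\T$ to $\T\res C$ via Proposition~\ref{prop:1}. The paper merely phrases this contrapositively, as a proof by contradiction that pinpoints a variable of a failing head atom and derives failure of a positive body atom.
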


\begin{proof}
First of all, take $\varphi=B \wedge N \, \rightarrow \, H$ of type \eqref{f:in0} and suppose that $\tuple{\T,\T} \models \varphi^\mu$ but $\tuple{\T \res C, \T} \not \models \varphi^\mu$. This means that $\tuple{\T \res C, \T}  \models B^\mu \wedge N^\mu$ and $\tuple{\T \res C, \T} \not \models H^\mu$. Since $\tuple{\T, \T}  \models H^\mu$, there exists an atomic formula $q$ in $H$ such that
$\tuple{\T , \T}  \models q^\mu$ but $\tuple{\T \res C, \T} \not \models q^\mu$. So we have a variable $x$ in $q$ with $\mu(x) \not \in \sigma(C)$. As $\varphi$ is safe, we know that $x$ occurs in an atomic formula $p$ in $B$. Then $\tuple{\T \res C, \T} \not \models p^\mu$ and $\tuple{\T \res C, \T} \not \models B^\mu$ which yields a contradiction.

If $\varphi$ is of type (\ref{f:in1}), we use a similar argument.

Finally, take $\varphi=\nec(B \wedge \Next B' \wedge N \wedge \Next N' \, \rightarrow \, \Next H')=\nec \psi$ of type (\ref{f:d}) and suppose that $\tuple{\T,\T} \models \varphi^\mu$ but $\tuple{\T \res C, \T} \not \models \varphi^\mu$. There exists $i \geq 0$ such that $\tuple{T_i,T_i} \models \psi^\mu$ and $\tuple{T_i \cap \sigma(C),T_i} \not \models \psi^\mu$. We then have that $\tuple{T_i \cap \sigma(C),T_i}  \models B^\mu \wedge (\Next B')^\mu \wedge N^\mu \wedge (\Next N')^\mu$ and $\tuple{T_i \cap \sigma(C),T_i} \not \models (\Next H')^\mu$. Using that $\varphi$ is safe and the same argument as above, we find an atomic formula $p$ in $B$ or $B'$ such that $\tuple{T_i \cap \sigma(C),T_i} \not \models p^\mu$ which implies $\tuple{T_i \cap \sigma(C),T_i} \not \models B^\mu \wedge (\Next B')^\mu$ and leads to contradiction. The other implication follows directly from Proposition~\ref{prop:1}.
\end{proof}

\begin{proposition}
\label{safe_formula}
For any safe sentence $\varphi=\forall x_1 \forall x_2 \ldots \forall x_n \psi$
\begin{eqnarray*}
\langle (D,\sigma),\T,\T\rangle\models \varphi & \hbox{iff} & \langle (D,\sigma),\T\res C,\T\rangle\models \varphi.
\end{eqnarray*}
\end{proposition}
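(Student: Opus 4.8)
The plan is to lift Lemma~\ref{safe_quantifier_free}, which already settles the quantifier-free case, through the leading block of universal quantifiers by unfolding the $\mathbf{QTHT}$ semantics of $\forall$.

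First I would prove, by an auxiliary induction on $n$, that for any splittable temporal formula $\psi$ whose free variables are among $x_1,\dots,x_n$ and any $\mathbf{TQHT}$-structure $\tuple{(D,\sigma),\H,\T}$, one has $\tuple{(D,\sigma),\H,\T}\models\forall x_1\cdots\forall x_n\,\psi$ if and only if $\tuple{(D,\sigma),w,\T}\models\psi^\mu$ for every variable assignment $\mu$ of $x_1,\dots,x_n$ into $D$ and every $w\in\{\H,\T\}$. The base case $n=1$ is exactly the semantic clause for $\forall$. In the inductive step one peels off the outermost quantifier, applies that clause, and invokes the induction hypothesis to each structure $\tuple{(D,\sigma),w_1,\T}$ with $w_1\in\{\H,\T\}$; the key observation is that the worlds $w$ that arise as $w\in\{w_1,\T\}$ for some $w_1\in\{\H,\T\}$ are again exactly $\H$ and $\T$, so the iterated quantification over intermediate worlds flattens into a single ``for all $w\in\{\H,\T\}$''. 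This is routine bookkeeping (using Proposition~\ref{prop:1} for the degenerate $n=0$ reading, where the statement is just Lemma~\ref{safe_quantifier_free} applied to the empty assignment).

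Next I would instantiate this characterization at the two structures of interest. Since $\T\res C\le\T$, the tuple $\tuple{(D,\sigma),\T\res C,\T}$ is a legitimate $\mathbf{TQHT}$-structure. For the total model $\tuple{(D,\sigma),\T,\T}$ the ``for all $w$'' clause is vacuous, so $\tuple{(D,\sigma),\T,\T}\models\varphi$ iff $\tuple{(D,\sigma),\T,\T}\models\psi^\mu$ for all $\mu$; whereas $\tuple{(D,\sigma),\T\res C,\T}\models\varphi$ iff, for all $\mu$, \emph{both} $\tuple{(D,\sigma),\T,\T}\models\psi^\mu$ \emph{and} $\tuple{(D,\sigma),\T\res C,\T}\models\psi^\mu$. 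The right-to-left implication of the proposition is then immediate (drop the second conjunct). For left-to-right, assume $\tuple{(D,\sigma),\T,\T}\models\varphi$; then for each $\mu$ we have $\tuple{(D,\sigma),\T,\T}\models\psi^\mu$, and since $\psi$ is safe, Lemma~\ref{safe_quantifier_free} gives $\tuple{(D,\sigma),\T\res C,\T}\models\psi^\mu$; as $\mu$ ranges over all assignments, both conjuncts hold for every $\mu$, hence $\tuple{(D,\sigma),\T\res C,\T}\models\varphi$.

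The genuine content sits entirely in Lemma~\ref{safe_quantifier_free}, which is already available, so I expect no serious obstacle beyond carefully verifying the first step: namely that moving from $\H$ to $\T$ under a quantifier (as forced by the $\mathbf{QTHT}$ clause for $\forall$ and reflected by Proposition~\ref{prop:1}) neither adds nor drops satisfying structures, so that the nested ``for all $w$'' conditions really do collapse as claimed.
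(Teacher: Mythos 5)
Your proof is correct, and it reaches the same destination as the paper's — lifting Lemma~\ref{safe_quantifier_free} through the universal prefix by induction on $n$ — but the decomposition is organized differently. The paper runs the induction directly on the safe sentence: writing $\varphi=\forall x_1\,\alpha(x_1)$, it uses the hypothesis $\tuple{(D,\sigma),\T,\T}\models\alpha(d)$ together with the induction hypothesis (applied to the safe sentence $\alpha(d)$, whose prefix is shorter) to discharge the two worlds $w\in\{\T\res C,\T\}$ demanded by the $\forall$-clause, so safety is threaded through every step of the induction. You instead isolate a purely semantic, safety-free flattening lemma — $\tuple{(D,\sigma),\H,\T}\models\forall x_1\cdots\forall x_n\,\psi$ iff $\tuple{(D,\sigma),w,\T}\models\psi^\mu$ for all assignments $\mu$ and all $w\in\{\H,\T\}$ — whose inductive step is exactly the observation that the intermediate worlds $\{w_1,\T\}$ for $w_1\in\{\H,\T\}$ collapse back to $\{\H,\T\}$; this claim is correct (since splittable temporal formulas are quantifier-free, the prefix is the only quantification) and, once it is in place, the proposition follows by applying Lemma~\ref{safe_quantifier_free} once per ground instance $\psi^\mu$, with the right-to-left direction falling out by simply dropping a conjunct (where the paper would invoke Proposition~\ref{prop:1}). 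What your route buys is modularity and a reusable general fact about nested universals in $\mathbf{QTHT}$, plus a fully transparent converse direction; what the paper's buys is brevity, since it never states the flattening explicitly. One small point to watch when writing it up: you must note (as you implicitly do) that safety of $\varphi$ means safety of the matrix $\psi$ by Definition~\ref{def:safe}, which is what licenses applying Lemma~\ref{safe_quantifier_free} to $\psi^\mu$.
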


\begin{proof}
Proceed by induction over the length of the prefix. If $n=0$, we can take any $\mu$ assignment of variables and apply Lemma \ref{safe_quantifier_free} on
$\varphi=\varphi^{\mu}$ . So take $\varphi=\forall x_1 \ldots \forall x_n \psi$ of length $n$ and suppose that the result is true for any universal safe sentence whose prefix has length at most $n-1$. If $\langle (D,\sigma),\T,\T\rangle\models \varphi$, put $\varphi=\forall x_1 \alpha(x_1)$ with $\alpha(x_1)=\forall x_2 \ldots \forall x_n \psi$. For any $d\in D$, we know that  $\langle (D,\sigma),\T,\T\rangle\models \alpha(d)$ and we have to show that $\langle (D,\sigma),\T \res C ,\T\rangle\models \alpha(d)$. The induction hypothesis and the fact that $\alpha(d)$ is a safe sentence whose prefix has length smaller or equal than $n-1$ finishes the proof.
\end{proof}

\begin{proofof}{Theorem~\ref{th:safe}}
If $\varphi$ is a safe sentence and
$\langle (D,\sigma),\T,\T\rangle$ is a temporal equilibrium model of $\varphi$, we have that $\langle (D,\sigma),\T \res C, \T\rangle \models \varphi$ by Proposition~\ref{safe_formula}. The definition of temporal equilibrium model implies that $\T \res C =\T$ and $ T_i \subseteq At(\sigma(C), P)$ for any $i \geq 0$.
\end{proofof}

\begin{lemma}\label{safe&ground&one}
Let $\varphi(x)$ be a safe splittable temporal formula of type (\ref{f:in0}), (\ref{f:in1}) or (\ref{f:d}) and take
$\langle (D,\sigma),\H,\T\rangle$ be such that $\T=\T \res C$. Then, for any $d \in D \setminus \sigma(C)$ we have:
$$ {\langle (D,\sigma),\H,\T\rangle\models \varphi(d)}.$$
\end{lemma}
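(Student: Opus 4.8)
The plan is to use safety to pin down, inside the positive body of $\varphi$, an atomic formula that mentions $x$; once $x$ is replaced by a domain element $d \notin \sigma(C)$, that atom becomes an atomic sentence that can occur in no $T_i$ (because $\T = \T\res C$ forces $T_i \subseteq At(\sigma(C),P)$) and in no $H_i$ (because $\H \le \T$). Hence the antecedent of the rule instance $\varphi(d)$ is unsatisfiable at every relevant time point, in both the ``here'' and the ``there'' component, and the implication holds vacuously.

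In detail, I would first fix the shape of $\varphi(x)$. By Definition~\ref{def:safe}, since $x$ occurs in $\varphi$, there is an atomic formula $p$ occurring in $B$ (for type~\eqref{f:in0}) or in $B$ or $B'$ (for types~\eqref{f:in1} and~\eqref{f:d}) such that $x$ occurs in $p$. Let $q$ be the atomic sentence obtained from $p$ by the substitution $x\mapsto d$ (together with the grounding already carried by the remaining parameters of $\varphi$, if any). Because $d \in D \setminus \sigma(C)$, one argument of $q$ is a domain element outside $\sigma(C)$, so $q \notin At(\sigma(C),P)$.

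Next I would record the key membership failures. For every $i \ge 0$ we have $T_i = T_i\res C = T_i \cap At(\sigma(C),P) \subseteq At(\sigma(C),P)$, so $q \notin T_i$; and since $\H \le \T$, also $H_i \subseteq T_i$, so $q \notin H_i$. Therefore $\langle (D,\sigma), w, \T\rangle \not\models q$ for both $w \in \{\H,\T\}$, and since the equality $T_i = T_i\res C$ holds for \emph{all} $i$, the analogous statement holds for every shifted structure $\langle (D,\sigma),(w,j),(\T,j)\rangle$, which is exactly what is needed when $p$ lies in $B'$ (and is read at the next stage) or when $\varphi$ is of type~\eqref{f:d}. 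As $q$ is one of the conjuncts making up the antecedent of $\varphi(d)$ (either inside $B$, evaluated at the current stage, or inside $\Next B'$, evaluated at the next one), the antecedent of $\varphi(d)$ fails in both components at the relevant stage. For types~\eqref{f:in0} and~\eqref{f:in1} the satisfaction clause for $\rightarrow$ then gives $\langle (D,\sigma),\H,\T\rangle \models \varphi(d)$ at once; for type~\eqref{f:d} one runs in addition the clause for $\nec$, observing that at every stage $j \ge 0$ the shifted structure still has all its $\T$-states contained in $At(\sigma(C),P)$, so the antecedent fails at stage $j$ as well and the implication holds there.

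I do not expect a genuine obstacle. The only points demanding a little care are that the argument must survive the $\Next$-shift for the $B'$-part and the $\nec$-quantification over stages, both of which follow immediately from the fact that $\T = \T\res C$ is inherited by every tail $(\T,j)$; and that the THT reading of $\rightarrow$ requires the antecedent to fail in \emph{both} the here-world and the there-world, which is precisely why we need $q \notin H_i$ in addition to $q \notin T_i$. Note also that only this one direction is asserted, so no converse argument is required.
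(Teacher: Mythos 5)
Your proof is correct and follows essentially the same route as the paper's: safety yields an atom of the positive body ($B$ or $B'$) containing $x$, and since $\T=\T\res C$ forces $T_i \subseteq At(\sigma(C),P)$ (and $\H \leq \T$), the instance of that atom at $d \notin \sigma(C)$ fails in both components at every stage, so the implication $\varphi(d)$ holds vacuously. The paper phrases this as a proof by contradiction and handles the $\Next$/$\nec$ cases by saying they are ``similar,'' whereas you spell out the shift argument explicitly; the substance is identical.
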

\begin{proof}
First of all, suppose that $\varphi(x)$ is of type (\ref{f:in0}):
$$B \, \wedge N \, \rightarrow \, H$$  and take $d \in D \setminus \sigma(C)$ and $w \in \{ \H,\T\}$ such that ${\langle (D,\sigma),w,\T\rangle\not \models \varphi(d)}$. This implies that ${\langle (D,\sigma),w,\T\rangle \models B(d) \wedge N(d)}$ but ${\langle (D,\sigma),w,\T\rangle\not \models H(d)}$. $\varphi(x)$ is safe so there must be an atom $p$ in $B$ such that $x$ has an occurrence in $p$. Since $T_0\subseteq \mathrm{At}(\sigma(C),P)$, it is clear that ${\langle (D,\sigma),w,\T\rangle\not \models p(d)}$, so ${\langle (D,\sigma),w,\T\rangle\not \models B(d)}$ which yields a contradiction.

\par The proof for the case of $\varphi(x)$ being of type (2) and (3) is similar.
\end{proof}
\begin{lemma}
\label{safe&ground&two}
Let $\varphi(x)=\forall x_1 \forall x_2 \ldots \forall x_n \psi$ with $\psi$ a splittable temporal formula and such that $\varphi(x)$ has no other free variables than $x$. Let $\M=\langle (D,\sigma),\H,\T\rangle$ be such that $\T=\T \res C$. Then,  if $\forall x \varphi(x)$ is safe, we have that:
$$ {\M \models \forall x \varphi(x)} \mbox{ iff } {\M \models \textstyle \bigwedge_{c \in C} \varphi(c)}.$$
\end{lemma}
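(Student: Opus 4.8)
The plan is to prove both implications separately, using the semantics of $\forall$ in $\mathbf{QTHT}$ and the two preceding lemmas. The left-to-right direction is the easy one: if $\M \models \forall x \varphi(x)$, then by the satisfaction clause for the universal quantifier we have $\langle (D,\sigma), w, \T\rangle \models \varphi(d)$ for all $d \in D$ and all $w \in \{\H,\T\}$; in particular this holds for every $d = \sigma(c)$ with $c \in C$, so $\M \models \bigwedge_{c\in C}\varphi(c)$ follows at once. (Strictly, one should note that $\varphi(\sigma(c))$ and $\varphi(c)$ have the same models, since satisfaction only depends on $\sigma(c)$.)

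For the right-to-left direction, assume $\M \models \bigwedge_{c \in C}\varphi(c)$; we must show $\langle (D,\sigma),w,\T\rangle \models \varphi(d)$ for every $d \in D$ and every $w \in \{\H,\T\}$. Split $D$ into the elements in $\sigma(C)$ and those in $D \setminus \sigma(C)$. For $d \in \sigma(C)$, pick $c \in C$ with $\sigma(c)=d$; from $\M \models \varphi(c)$ and the fact that satisfaction of $\varphi(c)$ coincides with that of $\varphi(d)$, together with Proposition~\ref{prop:1} to cover the case $w=\T$ when we only know the $\H$-component, we obtain $\langle (D,\sigma),w,\T\rangle \models \varphi(d)$. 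For $d \in D \setminus \sigma(C)$, this is exactly where Lemma~\ref{safe&ground&two}'s hypotheses feed into Lemma~\ref{safe&ground&one}: since $\T = \T\res C$ and $\forall x\,\varphi(x)$ is safe, the formula $\varphi(x)$ (a safe sentence $\forall x_1 \ldots \forall x_n \psi$ with only $x$ free, hence after suitable reindexing covered by the safe-splittable-formula machinery) satisfies $\langle (D,\sigma),\H,\T\rangle \models \varphi(d)$ for every such $d$ by Lemma~\ref{safe&ground&one}, and likewise for $w = \T$ by Proposition~\ref{prop:1}. Combining the two cases gives satisfaction for all $d \in D$, hence $\M \models \forall x\,\varphi(x)$.

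The step I expect to require the most care is the application of Lemma~\ref{safe&ground&one} to $\varphi(x)$: that lemma is stated for a \emph{splittable temporal formula} of type \eqref{f:in0}, \eqref{f:in1} or \eqref{f:d}, whereas here $\varphi(x)$ is a universally quantified formula $\forall x_1 \ldots \forall x_n \psi$ with $x$ still free. One needs either a mild restatement of Lemma~\ref{safe&ground&one} that tolerates an outer universal prefix over the remaining variables (its proof is unaffected, since the contradiction is derived from an atom of $B$ or $B'$ containing $x$, under any instantiation of $x_1,\dots,x_n$), or an inductive argument on the prefix length analogous to the one in Proposition~\ref{safe_formula}. Either way the content is routine; the only real obstacle is bookkeeping about which variable is quantified where and ensuring the safety hypothesis is correctly inherited by the sub-instances.
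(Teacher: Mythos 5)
Your proof is correct, but your right-to-left argument takes a genuinely different route from the paper's. The paper proceeds by induction on the prefix length $n$, using Lemma~\ref{safe&ground&one} only in the base case $n=0$; the inductive step is not a verbatim analogue of Proposition~\ref{safe_formula} but needs a variable-swapping trick: to get $\M \models \varphi(d)$ for $d \notin \sigma(C)$ one writes $\varphi(d)=\forall x_1\,\alpha(x_1)$, reduces via the induction hypothesis to showing $\M \models \bigwedge_{c\in C}\alpha(c)$, and obtains each $\alpha(c)$ by a second application of the induction hypothesis to the formula $\beta(z)=\forall x_2\ldots\forall x_n\,\psi(z,c,x_2,\ldots,x_n)$, in which the roles of $x$ and $x_1$ have been exchanged, noting $\beta(d)=\alpha(c)$. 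You instead strengthen Lemma~\ref{safe&ground&one} to tolerate a universal prefix over the remaining variables and apply it directly to the case $d\in D\setminus\sigma(C)$; this does work and is indeed routine, since after unwinding the quantifiers every ground instance of $\psi$ has a positive-body atom containing $d$, which lies in no $H_i$ or $T_i$ because $\T=\T\res C$ and $\H\leq\T$, so both the here-and-there and the total structures satisfy the instance. Your approach buys a shorter, induction-free argument at the cost of restating (and reproving, in a prefixed form) Lemma~\ref{safe&ground&one}; the paper's approach keeps that lemma purely quantifier-free but pays with the $\beta(z)$ swap. One caveat: your fallback suggestion that an induction ``analogous to Proposition~\ref{safe_formula}'' would also be routine understates the difficulty --- that is essentially what the paper does, and a naive peel-one-quantifier induction stalls exactly at establishing $\M\models\bigwedge_{c\in C}\alpha(c)$, which is why the swap is needed; so if you go that way, the extra idea must be supplied.
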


\begin{proof}
From left to right, just note that if $ {\M \models \forall x \varphi(x)}$ but ${\M \not \models \varphi(c)}$, for some $c \in C$, we would have that $ {\M \not \models \varphi(\sigma(c))}$ which would yield a contradiction.

For right to left, we can proceed by induction in $n$. If $n=0$, then $\varphi(x)$ is in the case of the previous lemma for any $d \in D \setminus \sigma(C)$, so $ {\M \models \forall x \varphi(x)}$ whenever ${\M \models \textstyle \bigwedge_{c \in C} \varphi(c)}.$ Now, suppose the result is true for any prenex formula with length up to $n-1$ and take $\varphi(x)=\forall x_1 \forall x_2 \ldots \forall x_n \psi(x,x_1,\ldots,x_n)$ such that ${\M \models \textstyle \bigwedge_{c \in C} \varphi(c)}.$ It only rests to show that ${\M \models  \varphi(d)}$ for any $d \in D \setminus \sigma(C)$.
Notice that $\varphi(d)=\forall x_1 \alpha(x_1)$ with $\alpha(x_1)=\forall x_2 \ldots \forall x_n \psi(d,x_1, x_2,\ldots,x_n).$ Since we can apply the induction hypothesis on $\alpha(x_1)$, it will be sufficient to prove that:

$${\M \models \textstyle \bigwedge_{c \in C} \alpha(c)}.$$

\noindent Now fix any $c \in C$ and take into account that $${\M \models  \varphi(c')=\forall x_1 \forall x_2 \ldots \forall x_n \psi(c',x_1, x_2 \ldots,x_n)}$$ for all $c' \in C$, so we can replace $x_1$ by any constant in $C$, including $c$, and so:
$${\M \models  \forall x_2 \ldots \forall x_n \psi(c',c,x_2,\ldots,x_n)}, \mbox{ for any } c' \in C$$
Observe that we can apply the induction hypothesis on $\beta(z)$, where
$$\beta(z)=\forall x_2 \ldots \forall x_n \psi(z,c, x_2, \ldots,x_n)$$
\noindent and then $ {\M \models \forall z \beta(z)}$. In particular ${\M \models \beta(d)}$ which completes the proof since $\beta(d)=\alpha(c)$.
\end{proof}

The following proposition can be easily proved
\begin{proposition}
\label{global_grounding} Given any $D\neq \emptyset$: \ \ $\langle (D, \sigma),\H,\T\rangle\models \varphi \mbox{ iff  } \langle (D, \sigma) ,\H,\T\rangle\models \mathrm{Gr}_{D}(\varphi).$
\qed
\end{proposition}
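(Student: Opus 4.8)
The plan is to prove the equivalence by structural induction on $\varphi$, establishing the stronger statement that it holds uniformly for \emph{every} $\mathbf{TQHT}$-structure over the fixed domain $(D,\sigma)$. This uniformity is what makes the induction work, because several satisfaction clauses evaluate subformulas not at $\M$ itself but at auxiliary structures over the same domain: the implication and universal clauses refer to $\tuple{(D,\sigma),w,\T}$ for both $w\in\{\H,\T\}$ (in particular the total structure $\tuple{(D,\sigma),\T,\T}$), and the modal clauses refer to the temporal shifts $(\M,j)$. Since all of these share $(D,\sigma)$ and hence share the same grounding $\mathrm{Gr}_D$, the induction hypothesis will apply to them directly.

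First I would dispatch the routine cases. For atoms and $\bot$ grounding is the identity, so both sides have identical satisfaction conditions. For a binary connective $\varphi_1\odot\varphi_2$ with $\odot\in\{\wedge,\vee,\rightarrow\}$, grounding distributes over $\odot$, and the satisfaction clause is assembled from the satisfaction of $\varphi_1$ and $\varphi_2$ at the relevant worlds; applying the induction hypothesis at each such structure closes the case. The modal operators are handled the same way: $\mathrm{Gr}_D$ commutes with $\Next$, $\nec$ and $\pos$, and the satisfaction of $\Next\psi$, $\nec\psi$ and $\pos\psi$ is defined through satisfaction of $\psi$ at $(\M,1)$ or at the shifts $(\M,j)$, so the induction hypothesis on $\psi$ at those structures gives the equivalence. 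The existential case is equally direct: $\mathrm{Gr}_D(\exists x\,\varphi(x))=\bigvee_{d\in D}\mathrm{Gr}_D(\varphi(d))$, and both $\M\models\exists x\,\varphi(x)$ and $\M\models\bigvee_{d\in D}\mathrm{Gr}_D(\varphi(d))$ amount to the existence of some $d\in D$ witnessing satisfaction at $\H$, which the induction hypothesis matches disjunct by disjunct.

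The step I expect to be the main obstacle is the universal case, where the two-world semantics of $\forall$ must be reconciled with the single-world semantics of the grounded conjunction. Here $\mathrm{Gr}_D(\forall x\,\varphi(x))=\bigwedge_{d\in D}\mathrm{Gr}_D(\varphi(d))$, and using the induction hypothesis, $\M\models\bigwedge_{d\in D}\mathrm{Gr}_D(\varphi(d))$ unpacks to ``$\tuple{(D,\sigma),\H,\T}\models\varphi(d)$ for every $d\in D$.'' By contrast, the satisfaction clause for $\forall$ demands that $\tuple{(D,\sigma),w,\T}\models\varphi(d)$ for every $d$ \emph{and} for both $w\in\{\H,\T\}$, i.e.\ it additionally requires satisfaction at the total structure $\tuple{(D,\sigma),\T,\T}$. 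The forward implication is trivial since $w=\H$ is one of the two worlds, so the crux is recovering the $w=\T$ conjunct from the $w=\H$ one. This is supplied exactly by Proposition~\ref{prop:1}: applied to each instance $\varphi(d)$, it guarantees that $\tuple{(D,\sigma),\H,\T}\models\varphi(d)$ implies $\tuple{(D,\sigma),\T,\T}\models\varphi(d)$. Feeding this persistence property into the universal case bridges the mismatch and completes the induction; the hypothesis $D\neq\emptyset$ is used only to keep the grounding ranges nonempty, in line with the standing convention on domains.
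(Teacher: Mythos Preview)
The paper does not actually supply a proof for this proposition; it simply states that it ``can be easily proved'' and closes with \qed. Your structural induction is the natural way to cash this out, and it is correct. In particular, you pinpoint the one place that is not completely mechanical: the paper's satisfaction clause for $\forall$ explicitly ranges over both worlds $w\in\{\H,\T\}$, whereas the clause for $\wedge$ only refers to the current structure, so replacing $\forall x\,\varphi(x)$ by $\bigwedge_{d\in D}\mathrm{Gr}_D(\varphi(d))$ \emph{a priori} loses the $w=\T$ conjunct. Your appeal to Proposition~\ref{prop:1} is exactly what recovers it, and it is also what justifies the induction hypothesis being stated uniformly over all $\mathbf{TQHT}$-structures on $(D,\sigma)$, which you need anyway for the $\rightarrow$ and modal cases. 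There is nothing to correct.
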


\begin{theorem}
\label{grounding}
If $\varphi=\forall x_1 \forall x_2 \ldots \forall x_n \psi$ is a safe
splittable temporal sentence and $\M=\langle(D,\sigma),\H,\T\rangle$ such that $\T=\T \res C$, then
\begin{eqnarray*}
M \models \varphi & \hbox{iff} & \M \models \mathrm{Gr}_{C}(\varphi).
\end{eqnarray*}
\end{theorem}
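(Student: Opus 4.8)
The plan is to reduce the claim to the two lemmas already proved, namely Lemma~\ref{safe&ground&one} and Lemma~\ref{safe&ground&two}, together with the auxiliary Proposition~\ref{global_grounding}. The key observation is that, under the hypothesis $\T=\T\res C$, grounding a safe universal sentence over $C$ loses no models compared to grounding it over the whole domain $D$, because the only instances that are discarded are those obtained by substituting some $d\in D\setminus\sigma(C)$ for a universally quantified variable, and by safety those instances are vacuously satisfied in any $\M$ with $\T=\T\res C$.

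First I would unfold $\mathrm{Gr}_C(\varphi)$ for $\varphi=\forall x_1\cdots\forall x_n\psi$: by the recursive definition of $\mathrm{Gr}_{D'}$, iterating the clause for $\forall$, we obtain $\mathrm{Gr}_C(\varphi)=\bigwedge_{c_1\in C}\cdots\bigwedge_{c_n\in C}\mathrm{Gr}_C\big(\psi(c_1,\dots,c_n)\big)$, and since $\psi$ is quantifier-free modulo the temporal operators, $\mathrm{Gr}_C(\psi(c_1,\dots,c_n))=\psi(c_1,\dots,c_n)$ (the grounding transformation is the identity on atoms and commutes with $\wedge,\vee,\rightarrow,\Next,\nec,\pos$). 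Hence $\mathrm{Gr}_C(\varphi)$ is literally $\bigwedge_{(c_1,\dots,c_n)\in C^n}\psi(c_1,\dots,c_n)$. The analogous computation with $D$ in place of $C$, combined with Proposition~\ref{global_grounding}, gives $\M\models\varphi$ iff $\M\models\bigwedge_{(d_1,\dots,d_n)\in D^n}\psi(d_1,\dots,d_n)$.

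So it remains to show, for $\M$ with $\T=\T\res C$ and $\forall x_1\cdots\forall x_n\psi$ safe, that $\M$ satisfies all $D$-instances of $\psi$ iff it satisfies all $C$-instances. One direction is trivial since $\sigma(C)\subseteq D$ and $\sigma(c)=c$-composition handles the renaming (this is exactly the left-to-right step in Lemma~\ref{safe&ground&two}). For the converse, I would invoke Lemma~\ref{safe&ground&two} directly: it states precisely that, when $\T=\T\res C$ and $\forall x\,\varphi(x)$ is safe, $\M\models\forall x\,\varphi(x)$ iff $\M\models\bigwedge_{c\in C}\varphi(c)$; applying it with the $n$-fold universal prefix (or, more carefully, peeling one quantifier at a time exactly as in the inductive proof of that lemma) yields $\M\models\bigwedge_{(c_1,\dots,c_n)\in C^n}\psi(c_1,\dots,c_n)$ iff $\M\models\forall x_1\cdots\forall x_n\psi$. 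Chaining the equivalences — $\M\models\varphi$ iff $\M\models$ all $D$-instances iff $\M\models$ all $C$-instances iff $\M\models\mathrm{Gr}_C(\varphi)$ — closes the argument.

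The main obstacle, such as it is, is bookkeeping rather than mathematics: one must be careful that Lemma~\ref{safe&ground&two} is stated for a formula $\varphi(x)$ with a single free variable $x$ and a universal prefix, so to apply it to the closed sentence $\forall x_1\cdots\forall x_n\psi$ one either re-runs its induction on $n$ at the top level, or notes that $\forall x_2\cdots\forall x_n\psi$ plays the role of ``$\varphi(x_1)$'' and that safety of the whole sentence entails safety of each such sub-instance. I would also remark explicitly that $\mathrm{Gr}$ commutes with all the temporal operators (this is immediate from its definition) so that nothing in the reduction is disturbed by the presence of $\Next$, $\nec$, $\pos$ inside $\psi$ — this is what lets the purely first-order Lemma~\ref{safe&ground&two} do all the work.
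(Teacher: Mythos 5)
Your proposal is correct and takes essentially the same route as the paper's proof: left-to-right via Proposition~\ref{global_grounding} together with $\sigma(C)\subseteq D$, and right-to-left by peeling the universal prefix one quantifier at a time using Lemma~\ref{safe&ground&two}, which is exactly the paper's induction on $n$. The only difference---unfolding $\mathrm{Gr}_{C}(\varphi)$ into its $C^{n}$-instances up front (noting that $\mathrm{Gr}$ commutes with the temporal operators) instead of carrying $\mathrm{Gr}_{C}$ through the induction---is purely a matter of bookkeeping.
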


\begin{proof}
From left to right, suppose that $\M\models \varphi$. By Proposition \ref{global_grounding}, we know that $\M \models \mathrm{Gr}_{D}(\varphi)$. The result follows since $\sigma(C) \subseteq D$ and
$\mathrm{Gr}_{C}(\varphi)=\mathrm{Gr}_{\sigma(C)}(\varphi)$.
\par
Now, from the right to left direction, take $\varphi=\forall x_1 \forall x_2 \ldots \forall x_n \psi$ a safe splittable temporal sentence and suppose that $\M \models \mathrm{Gr}_{C}(\varphi).$
Again, we can proceed by induction in $n$. If $n=0$, then $\varphi$ is
quantifier free so $\mathrm{Gr}_{C}(\varphi)=\varphi$. Suppose the
result is true for any safe splittable sentence with length up to $n-1$
and put $\varphi= \forall x_1 \alpha(x_1)$ with $\alpha(x_1)=\forall x_2
\ldots \forall x_n \psi(x_1,x_2,\ldots,x_n).$ Notice that $\alpha(x_1)$
is a safe formula that has no more free variables than $x_1$, so, if we apply Lemma \ref{safe&ground&two}, it will be sufficient to show that $\M \models  \textstyle \bigwedge_{c \in C} \alpha(c)$. Since we are supposing that
$$\M \models
\mathrm{Gr}_{C}(\varphi)=\textstyle \bigwedge_{c \in C}
\mathrm{Gr}_{C}(\alpha(c)),$$ and we can apply the induction hypothesis
on any $\alpha(c)$ with $c \in C$, it follows  that $\M \models \textstyle \bigwedge_{c \in C} \alpha(c)$
and this completes the proof.
\end{proof}


\begin{theorem}
\label{grounding_two}
If $\varphi$ is a safe splittable temporal sentence, then $\langle(D,
\sigma),\T,\T\rangle$ is a temporal equilibrium model of $\varphi$ iff
$\langle(D, \sigma),\T,\T\rangle$ is a temporal equilibrium model of $
\mathrm{Gr}_{C}(\varphi)$.
\end{theorem}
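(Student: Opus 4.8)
The goal is to prove Theorem~\ref{grounding_two}: for a safe splittable temporal sentence $\varphi$, the total structure $\langle(D,\sigma),\T,\T\rangle$ is a temporal equilibrium model of $\varphi$ iff it is a temporal equilibrium model of $\mathrm{Gr}_C(\varphi)$. The plan is to reduce everything to the already-proven machinery: Theorem~\ref{th:safe} (safe sentences force $\T\res C = \T$ and $T_i\subseteq At(\sigma(C),P)$ in any temporal equilibrium model), Theorem~\ref{grounding} (under the hypothesis $\T=\T\res C$, an arbitrary $\mathbf{QTHT}$-structure satisfies $\varphi$ iff it satisfies $\mathrm{Gr}_C(\varphi)$), and the observation that $\mathrm{Gr}_C(\varphi)$ is itself a (conjunction of) safe splittable temporal sentences over the expanded constant set.

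First I would unfold the definition of temporal equilibrium model on both sides: $\langle(D,\sigma),\T,\T\rangle \models \varphi$ and there is no $\H<\T$ with $\langle(D,\sigma),\H,\T\rangle\models\varphi$; similarly for $\mathrm{Gr}_C(\varphi)$. For the forward direction, assume $\langle(D,\sigma),\T,\T\rangle$ is a temporal equilibrium model of $\varphi$. By Theorem~\ref{th:safe} we get $\T=\T\res C$, so Theorem~\ref{grounding} applies to the total structure and gives $\langle(D,\sigma),\T,\T\rangle\models\mathrm{Gr}_C(\varphi)$, establishing the model part. For minimality, suppose toward a contradiction that some $\H<\T$ satisfies $\langle(D,\sigma),\H,\T\rangle\models\mathrm{Gr}_C(\varphi)$. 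Since we still have $\T=\T\res C$, Theorem~\ref{grounding} applies to $\langle(D,\sigma),\H,\T\rangle$ (its hypothesis is exactly $\T=\T\res C$, not $\H=\H\res C$), yielding $\langle(D,\sigma),\H,\T\rangle\models\varphi$, contradicting that $\T$ was in equilibrium for $\varphi$. Hence $\langle(D,\sigma),\T,\T\rangle$ is a temporal equilibrium model of $\mathrm{Gr}_C(\varphi)$.

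For the converse, assume $\langle(D,\sigma),\T,\T\rangle$ is a temporal equilibrium model of $\mathrm{Gr}_C(\varphi)$. Here the subtlety is that Theorem~\ref{th:safe} is stated for a safe sentence and gives $\T=\T\res C$; I would note that $\mathrm{Gr}_C(\varphi)$ is a conjunction of sentences each of which is (a ground instance of) a safe splittable temporal formula, or argue directly that the same safety argument — any variable in the body that survives grounding still appears in the positive body — forces $T_i\subseteq At(\sigma(C),P)$, hence $\T=\T\res C$. With $\T=\T\res C$ in hand, Theorem~\ref{grounding} again converts $\langle(D,\sigma),\T,\T\rangle\models\mathrm{Gr}_C(\varphi)$ into $\langle(D,\sigma),\T,\T\rangle\models\varphi$, and minimality transfers by the same Theorem~\ref{grounding} applied to any putative smaller $\H<\T$: a model of $\varphi$ below $\T$ would be a model of $\mathrm{Gr}_C(\varphi)$ below $\T$, contradiction.

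The main obstacle is the converse direction's need to re-establish $\T=\T\res C$ when we only know $\T$ is in equilibrium for the \emph{grounded} theory rather than for $\varphi$ itself. Theorem~\ref{th:safe} is phrased for a single safe sentence $\varphi$, so I would either (i) extend it to finite conjunctions of safe splittable sentences — trivial, since equilibrium for a conjunction makes $\T$ a model and the Proposition~\ref{safe_formula}-style argument localizes to each conjunct — or (ii) observe that each ground instance $\psi(c_1,\dots,c_n)$ appearing in $\mathrm{Gr}_C(\varphi)$ only mentions constants from $C$, so any atom outside $At(\sigma(C),P)$ in $T_i$ plays no role and can be removed, contradicting equilibrium unless $\T=\T\res C$ already. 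Both routes are short; everything else is bookkeeping on the definition of temporal equilibrium model plus two invocations of Theorem~\ref{grounding}.
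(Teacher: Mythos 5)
Your proof is correct and rests on the same two pillars as the paper's own argument: Theorem~\ref{th:safe} to obtain $\T=\T\res{C}$ and Theorem~\ref{grounding} to transfer satisfaction between $\varphi$ and $\mathrm{Gr}_{C}(\varphi)$ (noting, as the paper implicitly does, that the hypothesis of Theorem~\ref{grounding} constrains only $\T$, so it applies to any $\langle(D,\sigma),\H,\T\rangle$ below a fixed $\T$). Your forward direction is essentially verbatim the paper's. Where you genuinely add something is the converse: the paper dismisses it with the single remark that $\langle(D,\sigma),\H,\T\rangle\models\varphi$ implies $\langle(D,\sigma),\H,\T\rangle\models\mathrm{Gr}_{C}(\varphi)$, which settles minimality but leaves the model condition $\langle(D,\sigma),\T,\T\rangle\models\varphi$ unjustified, since a total model of the grounding need not satisfy the universally quantified sentence unless $\T=\T\res{C}$ is known. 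You correctly isolate this as the crux and supply the missing step: re-establish $\T=\T\res{C}$ from equilibrium for the grounded theory, either by noting that every conjunct of $\mathrm{Gr}_{C}(\varphi)$ is a (vacuously safe) ground splittable formula so the Lemma~\ref{safe_quantifier_free}/Proposition~\ref{safe_formula} argument goes through conjunct by conjunct, or, more directly, by observing that all atoms of $\mathrm{Gr}_{C}(\varphi)$ lie in $At(\sigma(C),P)$, hence $\langle(D,\sigma),\T\res{C},\T\rangle\models\mathrm{Gr}_{C}(\varphi)$ and equilibrium forces $\T\res{C}=\T$, after which Theorem~\ref{grounding} yields $\langle(D,\sigma),\T,\T\rangle\models\varphi$. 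Both of your routes are sound (the second is the shorter check), so your write-up is not merely equivalent to the paper's proof but slightly more complete on the right-to-left direction.
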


\begin{proof}
Suppose that $\langle(D, \sigma),\T,\T\rangle$ is a temporal equilibrium
model of $\varphi$ and $\langle(D, \sigma),\H,\T\rangle \models
\mathrm{Gr}_{C}(\varphi)$. Since $\varphi$ is safe, we know by Theorem
\ref{th:safe} that $\T = \T \res C$ so, applying Theorem
\ref{grounding}, it follows that $\langle(D, \sigma),\H,\T\rangle
\models \varphi$ and $\H=\T$. This shows that $\langle(D,
\sigma),\T,\T\rangle$ is also a temporal equilibrium model of $
\mathrm{Gr}_{C}(\varphi)$, The other implication follows directly from
the fact that  $\langle(D, \sigma),\H,\T\rangle \models \varphi$ implies
$\langle(D, \sigma),\H,\T\rangle \models \mathrm{Gr}_{C}(\varphi)$.
\end{proof}

\begin{proofof}{Theorem~\ref{th:safe_three}}
Let us show that the following assertions are equivalent:
\begin{enumerate}
\item $\langle (D,\sigma),\T,\T\rangle$ is a temporal equilibrium model
of $ \mathrm{Gr}_{C}(\varphi)$
\item $\langle (D,\sigma),\T,\T\rangle$ is a temporal equilibrium model
of $ \varphi$
\item $\langle (D,\sigma),\T,\T\rangle$ is a temporal equilibrium model
of $ \mathrm{Gr}_{C'}(\varphi)$
\end{enumerate}
Taking into account the previous theorem, we only have to prove the
equivalence of $2$ and $3$. Suppose that $\langle
(D,\sigma),\T,\T\rangle$ is a temporal equilibrium model of $ \varphi$
and $\langle (D,\sigma),\H,\T \rangle \models
\mathrm{Gr}_{C'}(\varphi)$. Because of Theorem \ref{th:safe}, we have
that $\T= \T \res C \subseteq \T \res {C'}$ and an obvious extension of
Theorem \ref{grounding} to $C'$, implies that $$\langle (D,\sigma),\H,\T
\rangle \models \varphi$$ and so $\H=\T$. This shows that 2 implies 3.
The other implication ($3. \Longrightarrow 2. $) follows directly.
\end{proofof}
\begin{lemma}
\label{martin}
If $T$ is any equilibrium model of a (non temporal) program $\Pi$ with rules of type~\eqref{f:in0}, then $T \subseteq J$, where $J$ is any model of the normal positive program $\Pi^{\wedge}$.
\end{lemma}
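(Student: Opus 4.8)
The plan is to rely on the standard characterisation of (propositional) equilibrium models: $T$ is an equilibrium model of $\Pi$ iff $\langle T,T\rangle\models\Pi$ and there is no $H\subsetneq T$ with $\langle H,T\rangle\models\Pi$. I would fix an arbitrary model $J$ of $\Pi^\wedge$, put $H\eqdef T\cap J$, and, aiming at a contradiction, assume $T\not\subseteq J$ so that $H\subsetneq T$. The goal is then to show $\langle H,T\rangle\models\Pi$, which contradicts the minimality clause in the definition of equilibrium model and hence forces $T\subseteq J$.

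The core step is to verify $\langle H,T\rangle\models r$ for every rule $r=(B\wedge N\rightarrow H_r)$ of $\Pi$ of type~\eqref{f:in0}. I would use the here-and-there readings: $\langle H,T\rangle\models B$ iff $B\subseteq H$, and $\langle H,T\rangle\models N$ iff no atom occurring in $N$ lies in $T$ (using $H\subseteq T$). Assuming $\langle H,T\rangle\models B\wedge N$, one gets $B\subseteq H\subseteq J$, $B\subseteq T$, and $N$ disjoint from $T$; hence $\langle T,T\rangle\models B\wedge N$, and since $\langle T,T\rangle\models r$ there is an atom $q$ occurring in $H_r$ with $q\in T$. On the other hand $r^\wedge=\{B\rightarrow p\mid p\text{ occurs in }H_r\}$ is contained in $\Pi^\wedge$, $J\models\Pi^\wedge$ and $B\subseteq J$, so \emph{every} atom occurring in $H_r$ lies in $J$; in particular $q\in J$. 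Therefore $q\in T\cap J=H$ and $\langle H,T\rangle\models H_r$. The degenerate case $H_r=\bot$ is immediate: then $r^\wedge=\emptyset$, and $\langle T,T\rangle\models r$ already prevents $B\subseteq T$ together with $N$ disjoint from $T$, so the implication holds vacuously.

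Collecting the rules yields $\langle H,T\rangle\models\Pi$ with $H\subsetneq T$, contradicting that $T$ is an equilibrium model; hence $T\subseteq J$, and since $J$ was arbitrary this also gives $T\subseteq LM(\Pi^\wedge)$. I do not expect a genuine obstacle: the only points needing care are the here-and-there semantics of negated literals and of disjunctive heads, and the empty-head case, all handled above. The conceptual heart of the argument is that $\Pi^\wedge$ forces \emph{all} head disjuncts at once, so membership of $B$ in $J$ propagates to the whole of $H_r$, whereas $\Pi$ only guarantees \emph{some} head disjunct in $T$; intersecting the two models produces a common witness inside $H=T\cap J$.
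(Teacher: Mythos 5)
Your proposal is correct and follows essentially the same route as the paper: take $H=T\cap J$, show $\langle T\cap J,T\rangle\models\Pi$ by using that $\Pi^\wedge$ forces every head disjunct (so the witness $q\in T$ for the disjunctive head also lies in $J$), and conclude $T\cap J=T$ by the minimality of $T$. Your explicit treatment of the empty-head case and the contradiction phrasing are only cosmetic differences from the paper's direct argument.
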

\begin{proof}
We will prove that $\tuple{T \cap J, T}\models \Pi$, and so, $T\cap J=T$ by the minimality of $T$.

Let $B \wedge N \rightarrow H$ of the form \eqref{f:in0} be an arbitrary rule in $\Pi$. To prove $\tuple{T \cap J, T}\models r$ we already know that $\tuple{T,T} \models r$ and remain to prove that if $\tuple{T \cap J, T}\models B \wedge N$ then $\tuple{T \cap J, T} \models H$. So, suppose that $ \tuple{T \cap J , T} \models B \wedge N$. Then $ \tuple{T , T} \models B \wedge N$ y $\tuple{J,J} \models B$. Therefore, $ \tuple{T , T} \models H$ and there exists $p\in H$ such that $ \tuple{T , T} \models p$. Since rule $B \rightarrow p \in \Pi^{\wedge}$ and $\tuple{J,J}\models B$, we get that $\tuple{J,J}\models p$ and so $\tuple{T \cap J , T} \models H$, as we wanted to prove. \end{proof}

Given any rule like $r$ like \eqref{f:in1} of \eqref{f:d} and a set of atoms $X$, we define its \emph{simplification} $simp(r,X)$ as:
\begin{eqnarray*}
simp(r,X) \eqdef \left\{
\begin{array}{l@{\hspace{20pt}}l}
\Next B' \wedge \Next N' \rightarrow \Next H' & \hbox{if } B\subseteq X \ \hbox{and } N \cap X = \emptyset\\
\top & \hbox{otherwise}
\end{array}
\right.
\end{eqnarray*}

\begin{definition}[Slice program]
Given some LTL interpretation $\T$, let us define now the sequence of programs:
\begin{eqnarray*}
slice(\Pi,\T,0) & \eqdef & \Pi^0 = ini_0(\Pi)\\
slice(\Pi,\T,1) & \eqdef & \{ simp(r,T_0) \ | \ r \in ini_1(\Pi) \cup dyn(\Pi) \}\\
slice(\Pi,\T,i+1)  & \eqdef & \{ \Next^i simp(r,T_i) \ | \ r \in dyn(\Pi) \} \hspace{20pt} \hbox{for } i \geq 1
\end{eqnarray*}
\qed
\end{definition}
\begin{theorem}[Theorem 3 in~\cite{ACPV11}]
\label{th:splitseq}
Let $\tuple{\T,\T}$ be a model of a splittable TLP $\Pi$. $\tuple{\T,\T}$ is a temporal equilibrium model of $\Pi$ iff
\begin{itemize}
\item[\rm (i)] $\T^0=T_0$ is a stable model of $slice(\Pi,\T,0)=\Pi^0=ini_0(\Pi)$ and
\item[\rm (ii)] $(\T^1 \setminus At^0)$ is a stable model of $slice(\Pi,\T,1)$ and
\item[\rm (iii)] $(\T^i \setminus At^{i-1})$ is a stable model of $slice(\Pi,\T,i)$ for $i\geq 2$.\qed
\end{itemize}
\end{theorem}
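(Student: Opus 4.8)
\medskip
\noindent\emph{Proof plan.} The plan is to transfer the statement to the classical (non-temporal) setting through the expanded program $\Pi^\infty$ and then read it off from the Splitting Sequence Theorem. By Theorem~\ref{th:expand}, $\tuple{\T,\T}$ is a temporal equilibrium model of $\Pi$ iff $I\eqdef\{\Next^i p\mid p\in T_i,\ i\geq 0\}$ is a stable model of $\Pi^\infty$, the latter being an (in general disjunctive and infinite) non-temporal program over the layered signature $\{\Next^i p\mid p\in At,\ i\geq 0\}$. So it suffices to compute the stable models of $\Pi^\infty$ layer by layer.

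The first step is to check that $U_i\eqdef\{\Next^j p\mid p\in At,\ 0\leq j\leq i\}$ is a splitting sequence for $\Pi^\infty$, with $\bigcup_i U_i$ the whole signature. This is exactly where the splittable restriction is used: a rule of the form \eqref{f:in0} involves only layer $0$; a rule of the form \eqref{f:in1} has all body atoms in layers $0,1$ and its head atom in layer $1$; and each instance $r^i$ of a rule \eqref{f:d} has all body atoms in layers $i,i{+}1$ and its head atom in layer $i{+}1$ (the abbreviation $\nec(B\wedge N\to H)$ is handled by unfolding it into one \eqref{f:in0}-rule and one \eqref{f:d}-rule, as the paper does). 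Hence no head atom of layer $i$ depends on any atom of a strictly higher layer, which is the splitting condition. I would note in passing that the infinitary character of $\Pi^\infty$ causes no trouble here: each atom sits in a unique layer and each rule couples only two consecutive layers, so every ``bottom'' $b_{U_k}(\Pi^\infty)$ is finite and the limit is simply the union.

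The second step is to match the layer-wise decomposition delivered by the Splitting Sequence Theorem (Lifschitz and Turner, in its disjunctive form) with the $slice$ programs. The bottom $b_{U_0}(\Pi^\infty)$ is exactly $ini_0(\Pi)=\Pi^0$, giving (i). Passing from $U_0$ to $U_1$, the rules with head in $U_1\setminus U_0$ are the members of $ini_1(\Pi)$ together with the $i{=}0$ instances $r^0$ of $dyn(\Pi)$; their partial evaluation against the fixed $T_0$ deletes a rule (sends it to $\top$) whenever its layer-$0$ positive body $B$ is not included in $T_0$ or its layer-$0$ negative body $N$ meets $T_0$, and otherwise deletes the layer-$0$ literals, leaving $\Next B'\wedge \Next N'\to \Next H'$ --- which is precisely $simp(r,T_0)$. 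Thus the evaluated top is $slice(\Pi,\T,1)$ and we obtain (ii). Passing from $U_{i-1}$ to $U_i$ with $i\geq 2$, only the $i{-}1$ instances of $dyn(\Pi)$ remain, and the same partial evaluation against $T_{i-1}$ yields $\{\Next^{i-1}simp(r,T_{i-1})\mid r\in dyn(\Pi)\}=slice(\Pi,\T,i)$, giving (iii). Feeding these identifications into the Splitting Sequence Theorem --- $I$ is a stable model of $\Pi^\infty$ iff $I\cap U_0$ is a stable model of the bottom and each $I\cap(U_i\setminus U_{i-1})$ is a stable model of the corresponding evaluated slice --- and translating back via $\T^i$ and $At^{i-1}$ completes the argument.

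The main obstacle is not the skeleton above but the bookkeeping in the middle. First, one must verify the splitting condition uniformly for \emph{all} rule shapes, including the unfolded abbreviation, and keep straight why the statement separates $i=0$, $i=1$ and $i\geq 2$ (because \eqref{f:in0}-rules live only at layer $0$ and \eqref{f:in1}-rules feed only layer $1$, whereas \eqref{f:d}-rules feed every layer $\geq 1$). Second, and more delicate, one must check that the program-level partial evaluation used in the Splitting Sequence Theorem agrees with $simp$ on the treatment of \emph{negative} body literals --- i.e.\ that ``$N\cap X=\emptyset$'' is exactly the condition under which the lower stratum satisfies the default-negated conjunction --- which is why the disjunctive version of the theorem is needed rather than the original normal-program one. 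A minor point worth stating explicitly is that $simp$ sends a deleted rule to $\top$ instead of literally removing it, and that adding $\top$ to a program leaves its stable models unchanged.
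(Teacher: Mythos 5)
This paper does not actually prove Theorem~\ref{th:splitseq} --- it is imported verbatim from \cite{ACPV11} --- but your reconstruction (reduce to stable models of $\Pi^\infty$ via Theorem~\ref{th:expand}, check that the layers $U_i=\{\Next^j p \mid p\in At,\ j\leq i\}$ form a splitting sequence, and identify the Lifschitz--Turner partial evaluations with $ini_0(\Pi)$, $slice(\Pi,\T,1)$ and $\{\Next^{i-1}simp(r,T_{i-1})\mid r\in dyn(\Pi)\}$) is correct and is essentially the argument used in the cited source. The only quibble is your parenthetical claim that each bottom $b_{U_k}(\Pi^\infty)$ is finite, which can fail when the ground signature $At(D,P)$ is infinite; fortunately finiteness plays no role in the (disjunctive, possibly infinite-program) splitting sequence theorem, so the argument stands without it.
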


\begin{proofof}{Proposition~\ref{prop:cred_one}}
Let $\tuple{\T,\T}$ be any temporal equilibrium model of $\Pi$ an denote by $\{L_i\}_{i\geq 0}$ the corresponding infinite sequence of ground atoms of $LM(\Pi^{\wedge})$. By Theorem~\ref{th:splitseq}, we know that, for all $i \geq 0$, $T_i$ (resp. $L_i$) is a stable model of $slice(\Pi,\T,i)$ (resp. of $slice(\Pi^{\wedge} ,LM(\Pi^{\wedge}),i)$. Finally, we can apply Lemma \ref{martin} and the fact that $slice(\Pi,\T,0)^{\wedge} = slice(\Pi^{\wedge},LM(\Pi^{\wedge}),0)$ and, for $i \geq 1$, $$slice(\Pi,\T,i)^{\wedge} \subseteq slice(\Pi^{\wedge},LM(\Pi^{\wedge}),i).$$
\end{proofof}

\begin{proofof}{Theorem~\ref{th:delta}}
Let $\tuple{\T,\T}$ be the unique temporal equilibrium model of $\Pi^\wedge$ and let $\tuple{\D,\D}$ denote the temporal interpretation defined by:

\begin{itemize}
\item $D_i=T_i \text{ if } 0 \leq i \leq 1$,
\item $D_2$ is the stable model of the positive non-disjunctive program:
$$\{\Next ^2 B \wedge \Next^2 B' \rightarrow \Next^2 p  \, | \, \nec(B \wedge \Next B'  \rightarrow \Next p) \in dyn(\Pi^{\wedge}) \} \cup slice(\Pi^{\wedge}, \L, 1)$$
\item $D_i=D_2 \text{ if } \geq 3$,
\end{itemize}
It is straightforward to check that $\tuple{\D,\D}$ is a temporal equilibrium model of $\Gamma_{\Pi}$. Notice that $T_2 \subseteq D_2$. This follows from Lemma \ref{martin} and the facts that $\T^2 \setminus AT^1$ is the stable model of $slice(\Pi^{\wedge}, \L, 1)$ and $D_2$ is a model of this latter program.

The cases $i=0,1,2$ follow from Proposition~\ref{prop:cred_one} an the fact that $T_2 \subseteq D_2$.
\par
When $i \geq 3$, we shall prove that $\tuple{\T^i \setminus At^{i-1} \cap \D^i  \setminus At^{i-1}, \T^i  \setminus At^{i-1}}$ is a model of $slice(\Pi^\wedge, \T, i)$ so by Theorem \ref{th:splitseq}, $\T^i \setminus At^{i-1} \cap \D^i  \setminus At^{i-1} = \T^i \setminus At^{i-1}$ and, consequently, $T_i \subseteq D_i$. So, take $\Next^{i} B' \rightarrow \Next^i H' \in slice(\Pi^\wedge, \T, i)$ and suppose that
\begin{eqnarray*}
\tuple{\T^i \setminus At^{i-1} \cap \D^i  \setminus At^{i-1}, \T^i  \setminus At^{i-1}} \models \Next^{i} B'
\end{eqnarray*}

\noindent This fact implies that $\tuple{\T^i \setminus At^{i-1} , \T^i  \setminus At^{i-1}} \models \Next^{i} B'$ and also that there exists a (positive normal) dynamic rule like \eqref{f:d} such that $B \subseteq T_{i-1} \subseteq D_{i-1}$. Since $\tuple{\T^i \setminus At^{i-1} , \T^i  \setminus At^{i-1}}$ is a model of  $slice(\Pi^\wedge, \T, i)$, the only atom $p \in H'$ satisfies $\tuple{\T^i \setminus At^{i-1} , \T^i  \setminus At^{i-1}} \models \Next^i p$. It only rests to show that $\tuple{\D^i, \D^i} \models \Next^i p$ or equivalently $\tuple{\D, \D} \models \Next^2 p$ (notice that $D_i=D_2$ if $i \geq 2$). Finally, we can use that the rule $\Next^2 B \wedge \Next^2 B' \rightarrow \Next ^2 p \in \Gamma_{\Pi}$ and also the fact that $\tuple{\D, \D} \models \Next^2 B \wedge \Next^2 B'$ because $i \geq 3$ and $B' \subseteq D_i=D_2$ and $B \subseteq T_{\i-1} \subseteq D_{i-1}=D_2$.
\end{proofof}

\end{document}